\definecolor{link color}{HTML}{626572}
\pgfplotsset{compat=1.18}
\newcommand{\figleft}{{\em (Left)}}
\newcommand{\figtopleft}{{\em (Top Left)}}
\newcommand{\figtopright}{{\em (Top Right)}}
\newcommand{\figbottomleft}{{\em (Bottom Left)}}
\newcommand{\figbottomright}{{\em (Bottom Right)}}
\newcommand{\figcenter}{{\em (Center)}}
\newcommand{\figright}{{\em (Right)}}
\newcommand{\figbottom}{{\em (Bottom)}}
\newcommand{\dd}{\mathop{}\!\mathrm{d}}
\def\dx{\dd x}
\def\eqref#1{equation~\ref{#1}}
\def\1{\bm{1}}
\DeclareMathAlphabet{\mathsfit}{\encodingdefault}{\sfdefault}{m}{sl}
\SetMathAlphabet{\mathsfit}{bold}{\encodingdefault}{\sfdefault}{bx}{n}
\def\gH{{\mathcal{H}}}
\def\gL{{\mathcal{L}}}
\def\gN{{\mathcal{N}}}
\newcommand{\pdata}{p_{\rm{data}}}
\DeclareMathOperator{\E}{\mathbb{E}}
\titlespacing*{\section}       {0pt}{1.3ex plus 0.6ex minus 0.4ex}{0.8ex plus 0.2ex minus 0.3ex}
\titlespacing*{\subsection}    {0pt}{1.1ex plus 0.4ex minus 0.3ex}{0.6ex plus 0.2ex minus 0.3ex}
\titlespacing*{\subsubsection} {0pt}{0.9ex plus 0.2ex minus 0.2ex}{0.4ex plus 0.2ex minus 0.3ex}
\titlespacing*{\paragraph}     {0pt}{0.5ex plus 0.5ex}{1ex}
\newcommand{\URL}{\url{https://github.com/vivekmyers/contrastive_planning}}
\DeclareRobustCommand{\bbm}[1]{\operatorname{\text{\usefont{U}{DSSerif}{m}{n}#1}}}
\def\1{\bbm{1}}
\title{Inference via Interpolation: Contrastive~Representations Provably Enable Planning and Inference}
\renewcommand{\@maketitle}{%
    \vbox{%
        \hsize\textwidth%
        \linewidth\hsize%
        \vskip 0.1in%
        \@toptitlebar%
        \centering%
        {\LARGE\bf \@title\par}%
        \@bottomtitlebar%
        \def\And{%
            \end{tabular}\hss\egroup%
            \hspace*{\the\glueexpr\fill*3/2}%
            \hbox to 0pt\bgroup%
            \hss\begin{tabular}[t]{c}\bf\rule{\z@}{24\p@}\ignorespaces%
        }%
        \def\AND{%
            \end{tabular}\hss\egroup\hspace*{\fill}\linebreak\hspace*{\fill}%
            \hbox to 0pt\bgroup\hss%
            \begin{tabular}[t]{c}\bf\rule{\z@}{24\p@}\ignorespaces%
        }%
        \parbox{\textwidth}{%
            \hspace*{\fill}%
            \hbox to 0pt{%
                \hss\begin{tabular}[t]{c}\bf\rule{\z@}{24\p@}\@author\end{tabular}\hss%
            }%
            \hspace*{\fill}%
        }%
        \vskip 0.3in \@minus 0.1in%
    }%
}
\author{%
    Benjamin Eysenbach \equal \\%
    Princeton University\\%
    \texttt{eysenbach@princeton.edu} \\%
    \And%
    Vivek Myers \equal \\%
    UC Berkeley \\%
    \texttt{vmyers@berkeley.edu} \\%
    \AND%
    Ruslan Salakhutdinov \\%
    Carnegie Mellon University \\%
    \texttt{rsalakhu@cs.cmu.edu} \\%
    \And%
    Sergey Levine \\%
    UC Berkeley \\%
    \texttt{svlevine@eecs.berkeley.edu} \\%
}
\begin{document}

\maketitle

\begin{abstract}
    Given time series data, how can we answer questions like ``what will happen in the future?'' and ``how did we get here?''
    These sorts of probabilistic inference questions are challenging when observations are high-dimensional.
    In this paper, we show how these questions can have compact, closed form solutions in terms of learned representations. The key idea is to apply a variant of contrastive learning to time series data. Prior work already shows that the representations learned by contrastive learning encode a probability ratio. By extending prior work to show that the marginal distribution over representations is Gaussian, we can then prove that joint distribution of representations is also Gaussian. Taken together, these results show that representations learned via temporal contrastive learning follow a Gauss-Markov chain, a graphical model where inference (e.g., prediction, planning) over representations corresponds to inverting a low-dimensional matrix. In one special case, inferring intermediate representations will be equivalent to interpolating between the learned representations. We validate our theory using numerical simulations on tasks up to 46-dimensions.\footnote{Code: \URL}
\end{abstract}

\section{Introduction}

Probabilistic modeling of time-series data has applications ranging from robotic control~\citep{theodorou2010generalized} to material science~\citep{jonsson1998nudged}, from cell biology~\citep{saelens2019comparison} to astrophysics~\citep{majewski2017apache}.
These applications are often concerned with two questions: \emph{predicting} future states (e.g., what will this cell look like in an hour), and \emph{inferring} trajectories between two given states.
However, answering these questions often requires reasoning over high-dimensional data, which can be challenging as most tools in the standard probabilistic toolkit require generation. Might it be possible to use discriminative methods (e.g., contrastive learning) to perform such inferences?

\begin{figure}
    \ignorespaces
    \centering
    \includegraphics{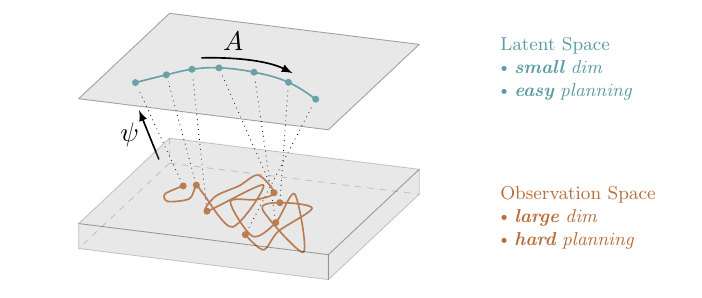}

    \caption{
        We apply temporal contrastive learning to observation pairs to obtain representations ($\psi(x_0), \psi(x_{t+k})$) such that $A \psi(x_0)$ is close to $\psi(x_{t+k})$.
        While inferring waypoints in the high-dimensional observation space is challenging, we show that the distribution over intermediate latent representations has a closed form solution corresponding to linear interpolation between the initial and final representations.
    }
    \label{fig:intuition-as-model}
\end{figure}

Many prior works aim to learn representations that are easy to predict while retaining salient bits of information. %
For time-series data, we want the representation to remain a sufficient statistic for distributions related to time \-- for example, they should retain bits required to predict future states (or representations thereof).
While generative methods~\citep{zhao2017towards, zhu2020s3vae, makhzani2015adversarial, dumoulin2016adversarially} have this property,
they tend to be computationally expensive (see, e.g.,~\citep{razavi2019generating}) and can be challenging to scale to high-dimensional observations.

In this paper, we will study how contrastive methods (which are discriminative, rather than generative) can perform inference over times series. Ideally, we want representations of observations $x$ to be a sufficient statistic for temporal relationships (e.g., does $x'$ occur after $x$?) but need not retain other information about $x$ (e.g. the location of static objects).
This intuition motivates us to study how contrastive representation learning methods~\citep{oord2018representation, sohn2016improved, chen2020improved, tian2020contrastive, wu2018unsupervised} might be used to solve prediction and planning problems on time series data.
While prior works in {computer vision}~\citep{chen2020simple, oord2018representation} and natural language processing (NLP)~\citep{mikolov2013efficient} often study the geometry of learned representations, our results show how geometric operations such as interpolation are related to inference.
Our analysis will focus on a regularized version of the symmetrized infoNCE objective~\citep{radford2021learning}, generating positive examples by sampling pairs of observations from the same time series data.
We will study how representations learned in this way can facilitate two inference questions: prediction and planning.\footnote{Following prior work~\citep{botvinick2012planning, attiasPlanningProbabilisticInference2003c}, we will use \emph{planning} to refer to the problem of inferring intermediate states, not to refer to an optimal control problem.}
As a stepping stone, we will build upon prior work~\citep{wang2020understanding} to show that regularized contrastive learning should produce representations whose marginal distribution is an isotropic Gaussian distribution.

The main contribution of this paper is to demonstrate how intermediate and future time steps in a time series can be inferred easily using contrastive representations. This inference problem captures a number of practical tasks: interpolation, in-filling, and even planning and control, where the intermediate steps represent states between a stand and goal. While ordinarily these problems require an iterative inference or optimization procedure, with contrastive representations this can be done simply by inverting a low-dimensional matrix. In one special case, inference will correspond to linear interpolation.
Our first step is to prove that, under certain assumptions, the distribution over future representations has a Gaussian distribution, with a mean that is a linear function of the initial state representation (\cref{lemma:prediction}).
This paves the main to our main result (\cref{thm:planning}): \textit{given an initial and final state, we show that the posterior distribution over an intermediate state representations also follows a Gaussian distribution.}
Said in other words, the representations follow a Gauss-Markov chain,\footnote{This probabilistic model is equivalent to a discretized Ornstein-Uhlenbeck process~\citep{uhlenbeck1930theory} and is also known as an AR(1) model~\citep[Eq.~3.1.16]{box2015time}.}
wherein any joint or conditional distribution can be computed by inverting a low-dimensional matrix~\citep{malioutov2006walk, weiss1999correctness} (See \cref{fig:intuition-as-model}).
In one special case, inference will correspond to linearly interpolating between the representations of an initial state and final state. \Cref{sec:experiments} provides numerical experiments.

\section{Related Work}
\label{sec:prior-work}

\paragraph{Representations for time-series data.}
In applications ranging from robotics to vision to NLP, users often want to learn representations of observations from time series data such that the \emph{spatial} arrangement of representations reflects the \emph{temporal} arrangement of the observations~\citep{oord2018representation,mikolov2013efficient,qianSpatiotemporalContrastiveVideo2021,eysenbach2020c}.
Ideally, these representations should retain information required to predict future observations and infer likely paths between pairs of observations.
Many approaches use an autoencoder, learning representations that retain the bits necessary to reconstruct the input observation, while also regularizing the representations to compressed or predictable~\citep{karamchetiLanguageDrivenRepresentationLearning2023,parkFinetuningPretrainedTransformers2021,devlinBERTPretrainingDeep2019,carrollUniMASKUnifiedInference2022a,zhu2020s3vae,chungRecurrentLatentVariable2015}.
A prototypical method is the sequential VAE~\citep{zhao2017towards}, which is computationally expensive to train because of the reconstruction loss, but is easy to use for inference.
Our work shares the aims of prior prior methods that attempt to linearize the dynamics of nonlinear systems~\citep{shu2020predictive, watter2015embed, banijamali2018robust, cui2020control, nguyen2021temporal, nguyen2020non}, including videos~\citep{goroshin2015learning, jayaraman2016slow}.
Our work aims to retain uncertainty estimates over predictions (like the sequential VAE) without requiring reconstruction.
Avoiding reconstruction is appealing \emph{practically} because it decreases the computational requirements and number of hyperparameters; and \emph{theoretically} because it means that representations only need to retain bits about temporal relationships and not about the bits required to reconstruct the original observation.

\paragraph{Contrastive Learning.} Contrastive learning methods circumvent reconstruction by learning representations that merely classify if two events were sampled from the same joint distribution~\citep{gutmannNoisecontrastiveEstimationNew2010,chenExploringSimpleSiamese2020a,radford2021learning}. When applied to representing states along trajectories, contrastive representations learn to classify whether two points lie on the same trajectory or not \citep{oord2018representation,sermanet2018time,eysenbach2022contrastive,qianSpatiotemporalContrastiveVideo2021,xuXSkillCrossEmbodiment2023}.
Empirically, prior work in computer vision and NLP has observed that contrastive learning acquires representations where interpolation between representations corresponds to changing the images in semantically meaningful ways~\citep{wiskottSlowFeatureAnalysis2002, yanSemanticsGuidedRepresentationLearning2021,oringAutoencoderImageInterpolation2020,chenHomomorphicLatentSpace2019,liuDataAugmentationLatent2018, mikolov2013efficient}.

Our analysis will be structurally similar to prior theoretical analysis on explaining why word embeddings can solve analogies~\citep{levy2014linguistic, arora2016latent, allen2019analogies}. Our work will make a Gaussianity assumption similar to~\citet{arora2016latent} and our Markov assumption is similar to the random walks analyzed in~\citet{arora2016latent, hashimoto2016word}. Our paper builds upon and extends these results to answer questions such as: ``what is the distribution over future observations representations?'' and ``what is the distribution over state (representations) that would occur on the path between one observation and another?'' While prior work is primarily aimed at explaining the good performance of contrastive word embeddings (see, e.g.,~\citep{arora2016latent}), we are primarily interested in showing how similar contrastive methods are an effective tool for inference over high-dimensional time series data.
Our analysis will show how representations learned via temporal contrastive learning (i.e., without reconstruction) are sufficient statistics for inferring future outcomes and can be used for performing inference on a graphical model (a problem typically associated with generative methods).

\paragraph{Goal-oriented decision making.}
Much work on time series representations is done in service of learning goal-reaching behavior, an old problem~\citep{newell1959report, laird1987soar} that has received renewed attention in recent years~\citep{chen2021decision, chane2021goal, colas2021intrinsically, yang2022rethinking, ma2022far, schroecker2020universal, janner2021offline,hejna2023distance}.
Some of the excitement in goal-conditioned RL is a reflection of the recent success of self-supervised methods in computer vision~\citep{rombach2022high} and NLP~\citep{OpenAI2023GPT4TR}.
Our analysis will study a variant of contrastive representation learning proposed in prior work for goal-conditioned RL~\citep{eysenbach2022contrastive,sermanet2018time}.
These methods are widespread, appearing as learning objectives for learning value functions~\citep{eysenbach2020c,eysenbach2022contrastive,zhengStabilizingContrastiveRL2023,planningVisual, agarwal2019reinforcement, ma2022vip,LIV,nair2022r3m,Liu2022MetricRN,Wang2023OptimalGR}, as auxiliary objectives~\citep{schwarzer2020data, tang2022understanding, nair2022r3m, stooke2021decoupling, bharadhwaj2022information, Anand2019UnsupervisedSR, Castro2021MICoIR}, in objectives for model-based RL~\citep{shu2020predictive, ghugare2022simplifying, Allen2021LearningMS,mazoureContrastiveValueLearning2022}, and in exploration methods~\citep{Guo2022BYOLExploreEB, Du2021CuriousRL}.
Our analysis will highlight connections between these prior methods, the classic successor representation~\citep{Dayan1993ImprovingGF,barreto2017successor}, and probabilistic inference.

\paragraph{Planning.}
Planning lies at the core of many RL and control methods, allowing methods to infer the sequence of states and actions that would occur if the agent navigated from one state to a goal state.
While common methods such as PRM~\citep{kavraki1996probabilistic} and RRT~\citep{lavalle2001rapidly} focus on building random graphs, there is a strong community focusing on planning methods based on probabilistic inference~\citep{attiasPlanningProbabilisticInference2003c,thijssenPathIntegralControl2015,williamsModelPredictivePath2015}. The key challenge is scaling to high-dimensional settings. While semi-parametric methods make progress on this problem this limitation through semi-parametric planning~\citep{flap,eysenbachSearchReplayBuffer2019, zhang2021c}, it remains unclear how to scale any of these methods to high-dimensional settings when states do not lie on a low-dimensional manifold.
Our analysis will show how contrastive representations may lift this limitation, with experiments validating this theory on 39-dimensional and 46-dimensional tasks.

\section{Preliminaries}
\label{sec:prelims}

Our aim is to learn representations of time series data such that the spatial arrangement of representations corresponds to the temporal arrangement of the underlying data: if one example occurs shortly after another, then they should {be mapped to} similar representations.
This problem setting arises in many areas, including video understanding and reinforcement learning. To define this problem formally, we will define a Markov process with states $x_t$ indexed by time $t$:\footnote{This can be extended to \emph{controlled} Markov processes appending the previous action to the observations.}
$p(x_{1:T} \mid x_0) = \prod_{t=0}^T p(x_{t+1} \mid x_t)$.
The dynamics $p(x_{t+1} \mid x_t)$ tell us the immediate next state, and we can define the distribution over states $t$ steps in the future by marginalizing over the intermediate states,
$p_t(x_t \mid x_0) = \int p(x_{1:t} \mid x_0) \dx_{1:t-1}$.
A key quantity of interest will be the $\gamma$-discounted state occupancy measure, which corresponds to a time-averaged distribution over future states:
\begin{equation}
    p_{t+}(x_{t+} = x) = (1 - \gamma) \sum_{t=0}^\infty \gamma^t p_t(x_t = x).\label{eq:occupancy}
\end{equation}

\paragraph{Contrastive learning.}
Our analysis will focus on applying contrastive learning to a particular data distribution.
Contrastive learning~\citep{gutmannNoisecontrastiveEstimationNew2010,oord2018representation,aroraTheoreticalAnalysisContrastive2019} acquires representations using ``positive'' pairs $(x, x^+)$ and ``negative'' pairs $(x, x^-)$.
While contrastive learning typically learns just one representation, we will use two different representation for the two elements of the pair; that is, our analysis will use terms like $\phi(x)$, $\psi(x^+)$ and $\psi(x^-)$.
We assume all representations lie in $\mathbb{R}^k$.

The aim of contrastive learning is to learn representations such that positive pairs have similar representations ($\phi(x) \approx \psi(x^+)$) while negative pairs have dissimilar representations ($\phi(x) \neq \psi(x^-)$).
Let $p(x,x^+)$
be the joint distribution over positive pairs (i.e., $(x, x^+) \sim p(x,x^+)$).
We will use the product of the marginal distributions to sample negative pairs ($(x, x^-) \sim p(x)p(x)$).
Let $B$ be the batch size, and note that the positive samples $x_{j}^{+}$ at index $j$ in the batch serve as \emph{negatives} for $x_{i}$ for any $i\neq j$.
Our analysis is based on the infoNCE objective without resubstitution~\citep{sohn2016improved, oord2018representation}:
\begin{align}
    \max_{\phi(\cdot), \psi(\cdot)} \E_{\{(x_{i}, x_{i}^+)\}_{i=1}^{B} \sim p(x,x^+)} \Biggl[
        \sum_{i=1}^{B}
        \log \tfrac{e^{-\frac{1}{2} \|\phi(x_{i}) - \psi(x_{i}^+)\|_2^2}}{\sum_{j\neq i} e^{-\frac{1}{2} \|\phi(x_{i}) - \psi(x_j^+)\|_2^2}} +\log \tfrac{e^{-\frac{1}{2} \|\phi(x_{i}) - \psi(x_{i}^+)\|_2^2}}{\sum_{j\neq i} e^{-\frac{1}{2} \|\phi(x_{j}) - \psi(x_i^+)\|_2^2}} \Biggr] \label{eq:info-nce}
\end{align}

We will use the symmetrized version of this objective~\citep{radford2021learning}, where the denominator is the sum across rows of a logits matrix and once where it is a sum across the.

While contrastive learning is typically applied to an example $x$ and an augmentation $x^+ \sim p(x \mid x)$ of that same example (e.g., a random crop), we will follow prior work~\citep{sermanet2018time, oord2018representation} in using the time series \emph{dynamics} to generate the positive pairs, so $x^+$ will be an observation that occurs temporally after $x$.
While our experiments will sample positive examples from the discounted state occupancy measure ($x^+ \sim p_{t+}(x_{t+} \mid x)$) in line with prior work~\citep{eysenbach2022contrastive}, our analysis will also apply to different distributions (e.g., always sampling a state $k$ steps ahead).

While prior work typically constrains the representations to have a constant norm (i.e., to lie on the unit hypersphere)~\citep{oord2018representation}, we will instead constrain the \emph{expected} norm of the representations is bounded, a difference that will be important for our analysis:
\begin{equation}
    \tfrac{1}{k}\E_{p(x)}\left[\|\psi(x)\|_2^2 \right] \le c. \label{eq:norm}
\end{equation}

Because the norm scales with the dimension of the representation, we have scaled down the left side by the representation dimension, $k$. In practice, we will impose this constraint by adding a regularization term $\lambda \E_{p(x)}\left[\|\psi(x)\|_2^2 \right]$ to the infoNCE objective (\refas{Eq.}{eq:info-nce}) and dynamically tuning the weight $\lambda$ via dual gradient descent.

\subsection{Key assumptions}
\label{sec:assumptions}

This section outlines the two key assumptions behind our analysis, both of which have some theoretical justification. Our main assumption examines the distribution over representations: %
\makerestatable
\begin{assumption} \label{asm:marginal}
    Regularized, temporal contrastive learning acquires representations whose marginal distribution representations $p(\psi) \triangleq \int p(x)\1(\psi(x) = \psi) \dx$
    is an isotropic Gaussian distribution:
    \begin{equation}
        p(\psi) = \gN(\psi; \mu = 0, \sigma = c \cdot I).
        \label{eq:gaussian}
    \end{equation}
\end{assumption}
In \cref{appendix:gaussian} we extend prior work~\citep{wang2020understanding} provide some theoretical intuition for why this assumption should hold: namely, that the isotropic Gaussian is the distribution that maximizes entropy subject to an expected L2 norm constraint (\refas{Eq.}{eq:norm})~\citep{shannon1948mathematical, jaynesInformationTheoryStatistical1957,conrad2010probability}.
Our analysis also assumes that the learned representations converge to the theoretical minimizer of the infoNCE objective:
\makerestatable
\begin{assumption} \label{asm:symmetrized}
    Applying contrastive learning to the symmetrized infoNCE objective results in representations that encode a probability ratio:
    \begin{equation}
        e^{-\frac{1}{2}\|\phi(x_0) - \psi(x)\|_2^2} = \frac{p_{t+}(x_{t+} = x \mid x_0)}{p(x) C}.
        \label{eq:bayes-opt}
    \end{equation}
\end{assumption}
This assumption holds under ideal conditions~\citep{Ma2018NoiseCE, poole2019variational} (see \cref{appendix:symmetrized}),\footnote{While the result of~\citet{Ma2018NoiseCE} has $C(x)$ depending on $x$, the symmetrized version~\citep{radford2021learning} removes the dependence on $x$.}
but we nonetheless call this an ``assumption'' because it may not hold in practice due to sampling and function approximation error.
This assumption means the learned representations are sufficient statistics for predicting the probability (ratio) of future states: these representations must retain all the information pertinent to reasoning about \emph{temporal} relationships, but need not retain information about the precise contents of the observations. As such, they may be much more compressed than representations learned via reconstruction.

Combined, these assumptions will allow us to express the distribution over sequences of representations as a Gauss-Markov chain. The denominator in \cref{asm:symmetrized}, $p(x)$, may have a complex distribution, but \cref{asm:marginal} tells us that the distribution over \emph{representations} has a simpler form. This will allow us to rearrange \cref{asm:symmetrized} to express the conditional distribution over representations as the product of two Gaussian likelihoods. Note that the left hand side of \cref{asm:symmetrized} already looks like a Gaussian likelihood.

\section{Contrastive Representations Make Inference Easy}
\label{sec:method}

In this section, our main result will be to show how representations learned by (regularized) contrastive learning are distributed according to a Gauss-Markov chain, making it straightforward to perform inference (e.g., planning, prediction) over these representations. Our proof technique will
combine (known) results about Gaussian distributions with (known) results about contrastive learning. %
We start by discussing an important choice of parametrization (\cref{sec:parametrization}) that facilitates prediction (\cref{sec:prediction}) before presenting the main result in \cref{sec:planning}.

\subsection{A Parametrization for Shared Encoders}
\label{sec:parametrization}

\begin{wrapfigure}[12]{R}{0.3\textwidth}
    \centering
    \vspace{-3.5em}
    \includegraphics[trim={0 8cm 9cm 1cm},clip,width=\linewidth]{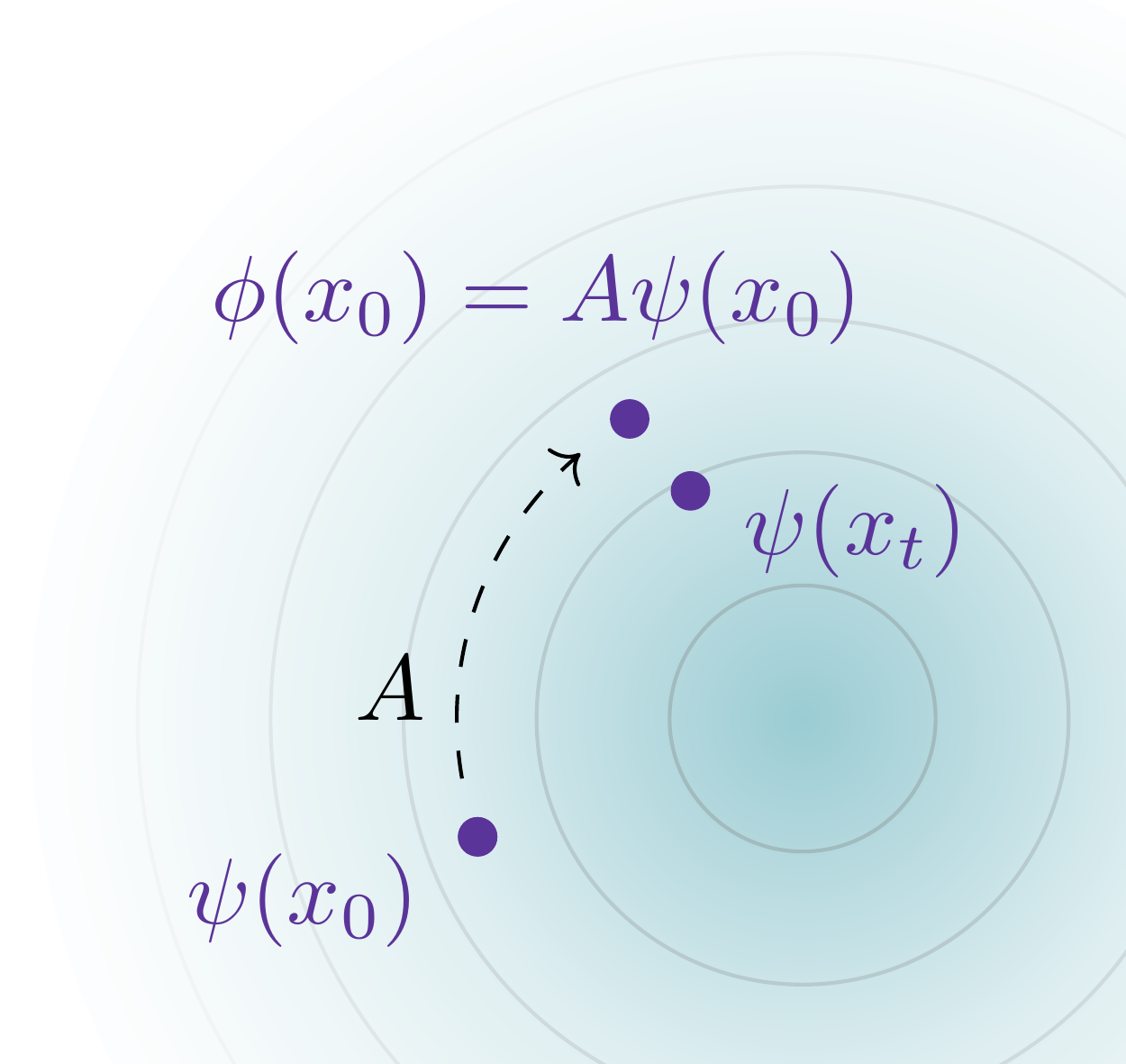}
    \vspace{-1em}
    \caption{A parametrization for temporal contrastive learning.}
    \label{fig:parametrization}
\end{wrapfigure}
This section describes the two encoders ($\psi(\cdot), \phi(\cdot)$) to compute representations of $x$ and $x^+$.
While prior work in computer vision and NLP literature use the same {encoder} for both $x$ and $x^+$, this decision does not make sense for many time-series data as it would imply that our prediction for $p(x_t \mid x_0)$ is the same as our prediction for $p(x_0 \mid x_t)$. However, the difficulty of transiting from $x_0$ to $x_t$ (e.g., climbing to the peak of a mountain) might be more difficult than the reverse (e.g., sledding down a mountain). %
Our proposed parametrization will handle this asymmetry. %

We will treat the encoder $\psi(\cdot)$ as encoding the contents of the state.
We will additionally learn a  matrix $A$ so that the function $\psi \mapsto A \psi$ corresponds to a (multi-step) prediction of the future representation. To map this onto contrastive learning, we will use $\phi(x) \triangleq A \psi(x)$ as the encoder for the initial state. One way of interpreting this encoder is as an additional linear projection applied on top of $\psi(\cdot)$, a design similar to those used in other areas of contrastive learning~\citep{chenExploringSimpleSiamese2020a}.
Once learned, we can use these {encoders} to answer questions about prediction (\cref{sec:prediction}) and planning (\cref{sec:planning}).

\subsection{Representations Encode a Predictive Model}
\label{sec:prediction}

\begin{wrapfigure}[13]{R}{0.3\linewidth}
    \centering
    \vspace*{-2em}
    \includegraphics[width=\linewidth,trim={2.5cm 6cm 6.5cm 1cm},clip]{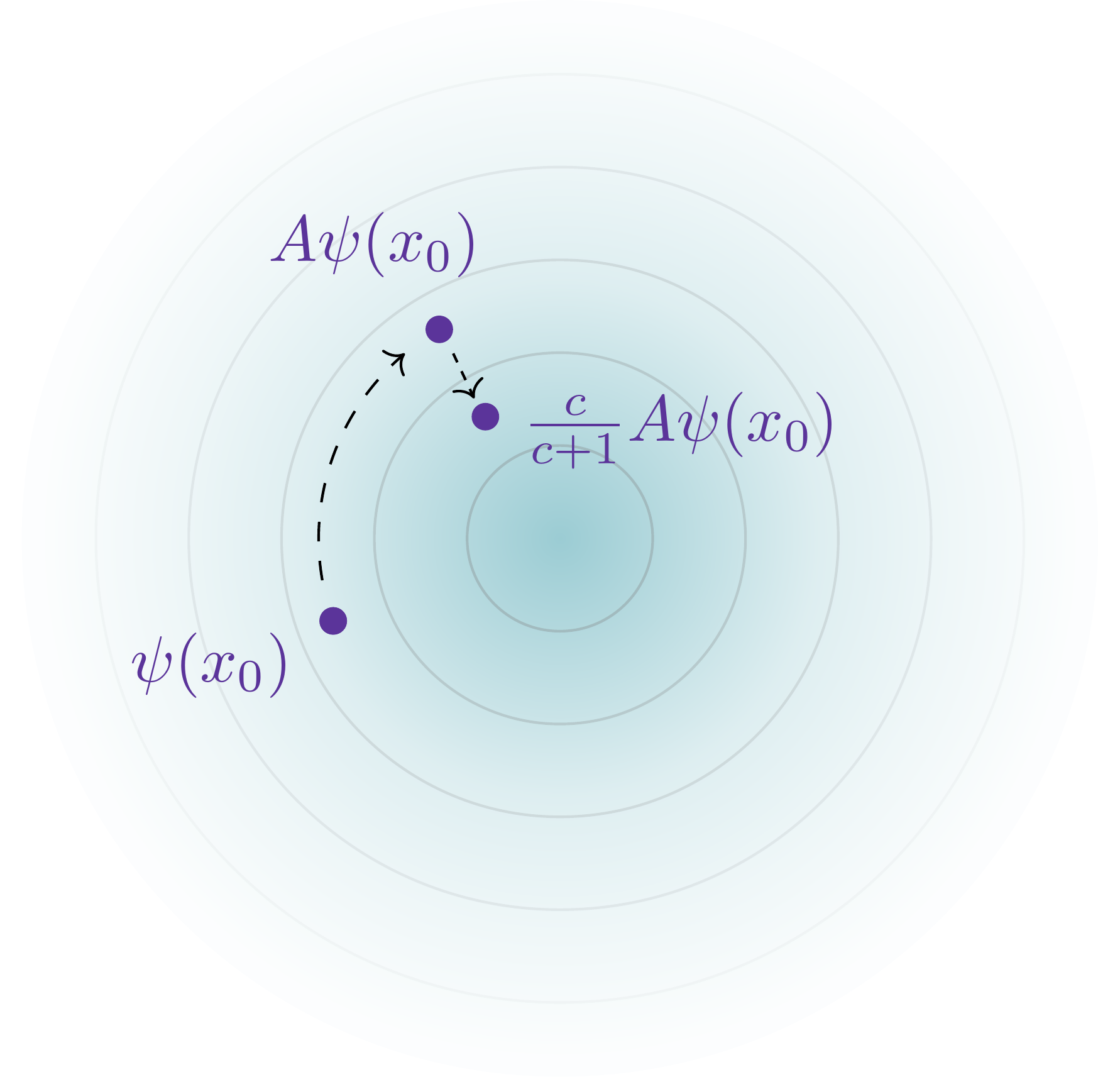}
    \vspace*{-1em}
    \caption{Predicting representations of future states.}
    \label{fig:prediction}
\end{wrapfigure}

Given an initial state $x_0$, what states are likely to occur in the future? Answering this question directly in terms of high-dimensional states is challenging, but our learned representations provide a straightforward answer.
Let $\psi_0 = \psi(x_0)$ and $\psi_{t+} = \psi(x_{t+})$ be random variables representing the representations of the initial state and a future state. Our aim is to estimate the distribution over these future representations, $p(\psi_{t+} \mid \psi_0)$. We will show that the learned representations encode this distribution.

\makerestatable
\begin{lemma}
    \label{lemma:prediction}
    Under the assumptions from \cref{sec:prelims}, the distribution over representations of future states follows a Gaussian distribution with mean parameter given by the initial state representation:
    \begin{equation}
        p(\psi_{t+} = \psi \mid \psi_0) = \gN\Bigl(\mu = \frac{c}{c+1} A \psi_0, \Sigma = \frac{c}{c+1}I\Bigr). \label{eq:marginal}
    \end{equation}
\end{lemma}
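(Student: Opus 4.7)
The plan is to combine \cref{asm:symmetrized} (with the shared-encoder parametrization $\phi(x)=A\psi(x)$ from \cref{sec:parametrization}) with \cref{asm:marginal} via Bayes' rule in representation space, and then complete the square.

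\textbf{Step 1 (rewrite the ratio in state space).} Substituting $\phi(x_0)=A\psi(x_0)$ into \cref{eq:bayes-opt} gives
\begin{equation*}
    \frac{p_{t+}(x_{t+}=x\mid x_0)}{p(x)} \;=\; \frac{1}{C}\, e^{-\frac{1}{2}\|A\psi(x_0)-\psi(x)\|_2^2}.
\end{equation*}
The crucial observation is that the right-hand side depends on $x_0$ and $x$ only through $\psi_0=\psi(x_0)$ and $\psi(x)$.

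\textbf{Step 2 (push forward to representation space).} I would push the conditional distribution $p_{t+}(\cdot\mid x_0)$ through the map $\psi$. For any test function $h$,
\begin{equation*}
    \E[h(\psi_{t+})\mid x_0] = \int h(\psi(x))\,\frac{p_{t+}(x\mid x_0)}{p(x)}\,p(x)\,dx = \int h(\psi)\cdot \tfrac{1}{C} e^{-\frac{1}{2}\|A\psi_0-\psi\|_2^2}\,p(\psi)\,d\psi,
\end{equation*}
where $p(\psi)$ is the push-forward marginal. Reading off the density and noting the result depends on $x_0$ only through $\psi_0$, I get
\begin{equation*}
    p(\psi_{t+}=\psi\mid\psi_0) \;=\; \tfrac{1}{C}\, e^{-\frac{1}{2}\|A\psi_0-\psi\|_2^2}\,p(\psi).
\end{equation*}

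\textbf{Step 3 (insert Gaussian prior and complete the square).} By \cref{asm:marginal}, $p(\psi)\propto e^{-\frac{1}{2c}\|\psi\|_2^2}$. Multiplying and collecting quadratic terms in $\psi$:
\begin{equation*}
    -\tfrac{1}{2c}\|\psi\|_2^2 - \tfrac{1}{2}\|\psi - A\psi_0\|_2^2 = -\tfrac{1}{2}\,\tfrac{c+1}{c}\,\psi^\top\psi + \psi^\top A\psi_0 + \text{const}.
\end{equation*}
Matching against the canonical form $-\tfrac{1}{2}(\psi-\mu)^\top\Sigma^{-1}(\psi-\mu)$ yields $\Sigma^{-1}=\tfrac{c+1}{c}I$ and $\Sigma^{-1}\mu=A\psi_0$, hence $\Sigma=\tfrac{c}{c+1}I$ and $\mu=\tfrac{c}{c+1}A\psi_0$, which is exactly \cref{eq:marginal}. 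The normalization constant is then forced to be the Gaussian one, absorbing $C$.

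\textbf{Main obstacle.} The algebra in Step 3 is routine; the only genuinely delicate part is Step 2, i.e., justifying that \cref{asm:symmetrized} (stated as a pointwise identity in $x$-space) legitimately transfers to a pointwise identity for the conditional density of $\psi_{t+}$. The subtlety is that $\psi$ may be many-to-one, so one has to be careful that the fiberwise integrals over level sets of $\psi$ cleanly produce the push-forward density $p(\psi)$ and that conditioning on $\psi_0$ (rather than $x_0$) is valid because the right-hand side is $\psi_0$-measurable. Once this measure-theoretic sufficiency step is handled, the Gaussian identification is immediate.
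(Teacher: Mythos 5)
Your proposal is correct and follows essentially the same route as the paper's proof, which likewise pushes the state-space ratio from \cref{asm:symmetrized} through $\psi$ via a law-of-the-unconscious-statistician argument (the paper does this with indicator functions over the fibers of $\psi$, arriving at $e^{-\frac{1}{2}\|A\psi_0-\psi_{t+}\|_2^2}\,p(\psi_{t+})\,p(\psi_0)$ up to proportionality), substitutes the Gaussian marginal from \cref{asm:marginal}, and completes the square to obtain $\mu=\tfrac{c}{c+1}A\psi_0$, $\Sigma=\tfrac{c}{c+1}I$. The measure-theoretic subtlety you flag about the many-to-one map is exactly the step the paper handles with its fiberwise indicator integrals, so no substantive difference remains.
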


The main takeaway here is that the distribution over future representations has a convenient, closed form solution. The representation norm constraint, $c$, determines the shrinkage factor $\frac{c}{c+1} \in [0, 1)$; highly regularized settings (small $c$) move the mean closer towards the origin and decrease the variance, as visualized in \cref{fig:prediction}. Regardless of the constraint $c$, the predicted mean is a linear function $\psi \mapsto \frac{c}{c+1}A \psi$.
The proof is in \cref{appendix:prediction}. The proof technique is similar to that of the law of the unconscious statistician.

\subsection{Planning over One Intermediate State}
\label{sec:planning}

We now show how these representations can be used for a specific type of planning: given an initial state $x_0$ and a future state $x_{t+}$, infer the representation of an intermediate ``waypoint'' state $x_w$. The next section will extend this analysis to inferring the entire sequence of intermediate states. We assume $x_0 \rightarrow x_w \rightarrow x_{t+}$ form a Markov chain where $x_w \sim p(x_{t+} \mid x_0 = x_0)$ and $x_{t+} \sim p(x_{t+} \mid x_0 = x_w)$ are both drawn from the discounted state occupancy measure (\refas{Eq.}{eq:occupancy}). Let random variable $\psi_w = \psi(x_w)$ be the representation of this intermediate state.
Our main result is that the posterior distribution over waypoint \emph{representations} has a closed form solution in terms of the initial state representation and future state representation:
\makerestatable
\begin{theorem} \label{thm:planning}
    Under Assumptions 1 and 2, the posterior distribution over waypoint representations is a Gaussian whose mean and covariance are linear functions of the initial and final state representations:
    \begin{align*}
        p(\psi_w \mid \psi_0, \psi_{t+}) = \mathcal{N}\Bigl(
        \psi_w;
        \mu & \,{\color{black}= \Sigma(A^T \psi_{t+} + A \psi_0)},
        \Sigma^{-1} \,{\color{black} = \tfrac{c}{c+1}A^T A + \tfrac{c+1}{c}I}
        \Bigr).
    \end{align*}
\end{theorem}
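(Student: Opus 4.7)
The plan is to apply Bayes' rule to the Markov chain $x_0 \to x_w \to x_{t+}$, lift it to the level of representations, and then complete the square in a product of two Gaussians whose parameters are supplied by \cref{lemma:prediction}. Concretely, since $x_w$ is drawn from the discounted occupancy starting at $x_0$ and $x_{t+}$ is drawn from the discounted occupancy starting at $x_w$, the Markov structure gives
\begin{equation*}
    p(\psi_w \mid \psi_0, \psi_{t+}) \;\propto\; p(\psi_w \mid \psi_0)\, p(\psi_{t+} \mid \psi_w),
\end{equation*}
where the omitted normalizer depends only on $(\psi_0,\psi_{t+})$. \cref{lemma:prediction} can then be invoked \emph{twice}: once with $\psi_0$ as the conditioning variable to describe $p(\psi_w \mid \psi_0)$, and once with $\psi_w$ playing the role of the initial representation to describe $p(\psi_{t+} \mid \psi_w)$. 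In both cases the conditional is $\gN\bigl(\tfrac{c}{c+1}A\,\cdot,\,\tfrac{c}{c+1}I\bigr)$.

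Next, I would take logarithms and keep only the terms that depend on $\psi_w$. The two log-densities contribute
\begin{equation*}
    -\tfrac{c+1}{2c}\bigl\|\psi_w - \tfrac{c}{c+1}A\psi_0\bigr\|_2^2 \;-\; \tfrac{c+1}{2c}\bigl\|\psi_{t+} - \tfrac{c}{c+1}A\psi_w\bigr\|_2^2,
\end{equation*}
which is a quadratic form in $\psi_w$. Expanding and collecting the quadratic-in-$\psi_w$ terms gives the precision matrix $\Sigma^{-1} = \tfrac{c+1}{c}I + \tfrac{c}{c+1}A^{\!T}A$, matching the theorem. The linear-in-$\psi_w$ terms (from the cross-terms $-2\cdot\tfrac{c}{c+1}\psi_w^{\!T}A\psi_0$ in the first squared norm and $-2\cdot\tfrac{c}{c+1}\psi_{t+}^{\!T}A\psi_w$ in the second) have prefactors that cancel against $\tfrac{c+1}{2c}$ and combine to $\psi_w^{\!T}(A\psi_0 + A^{\!T}\psi_{t+})$. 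Reading off the natural parameters of a Gaussian gives $\Sigma^{-1}\mu = A^{\!T}\psi_{t+} + A\psi_0$, hence $\mu = \Sigma(A^{\!T}\psi_{t+} + A\psi_0)$, which is exactly the claimed mean.

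The main obstacle is the \emph{first} step: justifying that the Markov structure on observations transfers cleanly to the representations, so that $p(\psi_{t+} \mid \psi_w, \psi_0) = p(\psi_{t+} \mid \psi_w)$ and that the two conditional densities I multiply together are both of the form supplied by \cref{lemma:prediction}. This uses \cref{asm:symmetrized}, which says $\psi$ is a sufficient statistic for the future-state ratio, together with the assumption that the waypoint is itself sampled from the discounted state occupancy so that the second hop has the same structure as the first. Once that is in place, the remaining work is the routine (but bookkeeping-heavy) product-of-Gaussians calculation described above; there are no analytic subtleties beyond tracking the shrinkage factor $\tfrac{c}{c+1}$ that propagates through both applications of the lemma.
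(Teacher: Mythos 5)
Your proposal is correct and follows essentially the same route as the paper's proof: Bayes' rule plus the Markov property to factor the posterior as $p(\psi_w \mid \psi_0)\,p(\psi_{t+}\mid\psi_w)$, two applications of \cref{lemma:prediction}, and a completion of the square whose quadratic and linear coefficients match the paper's computation exactly. The one subtlety you correctly flag (that the Markov structure on observations must transfer to the deterministic representations) is treated just as tersely in the paper, which simply asserts it in its first step.
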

The proof (\cref{appendix:planning}) uses the Markov property together with \cref{lemma:prediction}.
The main takeaway from this lemma is that the posterior distribution takes the form of a simple probability distribution (a Gaussian) with parameters that are linear functions of the initial and final representations.

We give three examples to build intuition:
\paragraph{Example 1:} $A = I$ and the $c$ is very large (little regularization). Then, the covariance is $\Sigma^{-1} \approx 2I$ and the mean is the simple average of the initial and final representations $\mu \approx \frac{1}{2} (\psi_0 + \psi_{t+})$. In other words, the waypoint representation is the midpoint of the line $\psi_0 \rightarrow \psi_{t+}$.
\paragraph{Example 2:} $A$ is a rotation matrix and $c$ is very large. Rotation matrices satisfy $A^T = A^{-1}$ so the covariance is again $\Sigma^{-1} \approx 2I$. As noted in \cref{sec:prediction}, we can interpret $A \psi_0$ as a  \emph{prediction} of which representations will occur after $\psi_0$. Similarly, $A^{-1} \psi_{t+} = A^T \psi_{t+}$ is a prediction of which representations will occur before $\psi_{t+}$. \Cref{thm:planning} tells us that the mean of the waypoint distribution is the simple average of these two predictions, $\mu \approx \frac{1}{2}(A^T \psi_{t+} + A \psi_0)$.
\paragraph{Example 3:} $A$ is a rotation matrix and $c = 0.01$ (very strong regularization). In this case $\Sigma^{-1} = \frac{0.01}{0.01 + 1}A^TA + \frac{0.01 + 1}{0.01}I \approx 100I$, so $\mu \approx \frac{1}{100}(\psi_0 + \psi_{t+}) \approx 0$. Thus, in the case of strong regularization, the posterior concentrates around the origin.

\subsection{Planning over Many Intermediate States}

This section extends the analysis to multiple intermediate states. Again, we will infer the posterior distribution of the representations of these intermediate states, $\psi_{w_1}, \psi_{w_2}, \cdots$. We assume that these states form a Markov chain.

\makerestatable
\begin{theorem} \label{thm:multistep}
    Given observations from a Markov chain $x_0 \rightarrow x_1 \cdots x_{t+}$, the joint distribution over representations is a Gaussian distribution. Using $\psi_{1:n} = \begin{pmatrix} \psi_{w_1}, \cdots, \psi_{w_n} \end{pmatrix}$ to denote the concatenated representations of each observation, we can write this distribution as
    \begin{align*}
        p(\psi_{1:n}) \propto \exp\bigl( -\tfrac{1}{2}\psi_{1:n}^T \Sigma^{-1} \psi_{1:n} + \eta^T \psi_{1:n} \bigr),
    \end{align*}
    where $\Sigma^{-1}$ is a tridiagonal matrix
    \begin{align*}
        \Sigma^{-1} & = \left(\begin{matrix}
                                      \frac{c}{c+1}A^TA + \frac{c+1}{c}I & -A^T & & \\
                                      -A & \frac{c}{c+1}A^TA + \frac{c+1}{c}I & -A^T & \\[-2ex]
                                      &   &  & \ddots \\
                                  \end{matrix}\right) \quad  \text{and}\;
        \eta  = \left(\begin{matrix}
                              A \psi_0 \\
                              0 \\[-3pt]
                              \vdots \\
                              A^T \psi_{t+}
                          \end{matrix}\right).
    \end{align*}
\end{theorem}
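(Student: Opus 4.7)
The plan is to exploit the Markov factorization of the trajectory and apply \cref{lemma:prediction} to each one-step transition, reducing the conditional density $p(\psi_{1:n}\mid\psi_0,\psi_{t+})$ to a product of Gaussians whose log is a quadratic form in $\psi_{1:n}$ from which $\Sigma^{-1}$ and $\eta$ can be read off directly. (I read the theorem as a statement about the conditional law given the endpoints, since $\psi_0$ and $\psi_{t+}$ appear only inside $\eta$.)

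First, because $\psi$ is a deterministic function of $x$, the Markov property on $x_0\to x_1\to\cdots\to x_{t+}$ lifts to the representations, so
\begin{equation*}
p(\psi_{1:n}\mid\psi_0,\psi_{t+}) \;\propto\; p(\psi_{t+}\mid\psi_{w_n})\prod_{i=1}^{n} p(\psi_{w_i}\mid\psi_{w_{i-1}}),
\end{equation*}
with the convention $\psi_{w_0}=\psi_0$ and $\psi_{w_{n+1}}=\psi_{t+}$. By \cref{lemma:prediction} each factor equals $\mathcal{N}(\tfrac{c}{c+1}A\psi_{w_{i-1}},\tfrac{c}{c+1}I)$, so taking logs and expanding the square yields, up to a constant,
\begin{equation*}
-\tfrac{c+1}{2c}\,\psi_{w_i}^T\psi_{w_i} \;+\; \psi_{w_i}^T A\,\psi_{w_{i-1}} \;-\; \tfrac{c}{2(c+1)}\,\psi_{w_{i-1}}^T A^T A\,\psi_{w_{i-1}}.
\end{equation*}

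Next I would sum these contributions over $i=1,\dots,n+1$ and collect coefficients of each monomial in $\psi_{1:n}$. Every interior variable $\psi_{w_j}$ picks up a $-\tfrac{c+1}{2c}\|\psi_{w_j}\|^2$ term from factor $j$ and a $-\tfrac{c}{2(c+1)}\psi_{w_j}^T A^T A\,\psi_{w_j}$ term from factor $j+1$, which together match the diagonal block $\tfrac{c+1}{c}I+\tfrac{c}{c+1}A^T A$ of the claimed $\Sigma^{-1}$. The cross terms $\psi_{w_i}^T A\,\psi_{w_{i-1}}$ between consecutive interior variables match the off-diagonal blocks $-A$ and $-A^T$ once we symmetrize using $u^T M v=\tfrac{1}{2}u^T M v+\tfrac{1}{2}v^T M^T u$ with $M=-A$. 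Finally, since $\psi_0$ and $\psi_{t+}$ are conditioned on, their quadratic self-terms fold into the constant, while the cross terms $\psi_{w_1}^T A\,\psi_0$ (from factor $1$) and $\psi_{w_n}^T A^T\psi_{t+}$ (from factor $n+1$) are linear in $\psi_{1:n}$, producing $\eta_1=A\psi_0$, $\eta_n=A^T\psi_{t+}$, and $\eta_i=0$ otherwise.

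The main obstacle is purely bookkeeping: one has to verify that the scalar coefficients $\tfrac{c+1}{2c}$ and $\tfrac{c}{2(c+1)}$ combine correctly across the two factors that share each interior variable (so that both pieces of the claimed diagonal block appear with the right weights), and that the asymmetric scalar cross terms $\psi_{w_i}^T A\,\psi_{w_{i-1}}$ assemble into the symmetric tridiagonal precision blocks $-A$ and $-A^T$ with consistent signs. No additional ideas are needed beyond the Markov factorization and \cref{lemma:prediction}; the endpoint cases and the sign conventions are the only places where a careful accountant is required.
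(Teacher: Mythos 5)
Your proposal is correct and follows essentially the same route as the paper's proof: factor the conditional density via the Markov property, substitute the Gaussian transition kernel from \cref{lemma:prediction} for each factor, and expand the squares to read off the tridiagonal precision matrix and the linear term $\eta$ (with the endpoint cross terms $\psi_{w_1}^T A\psi_0$ and $\psi_{w_n}^T A^T\psi_{t+}$ supplying the nonzero entries of $\eta$). Your reading of the statement as a conditional law given the endpoints also matches what the paper actually proves.
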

This distribution can be written in the canonical parametrization as $\Sigma = \Lambda^{-1}$ and $\mu = \Sigma \eta$.
Recall that Gaussian distributions are closed under marginalization. Thus, once in this canonical parametrization, the marginal distributions can be obtained by reading off individual entries of these parameters:
\begin{equation*}
    p(\psi_i \mid \psi_0, \psi_{t+}) = \gN\left(\psi_i; \mu_i = (\Sigma \eta)^{(i)}, \Sigma_i = (\Lambda^{-1})^{(i, i)}\right).
\end{equation*}
The key takeaway here is that this posterior distribution over waypoints is Gaussian, and it has a closed form expression in terms of the initial and final representations (as well as regularization parameter $c$ and the learned matrix $A$).

In the general case of $n$ intermediate states, the posterior distribution is
\begin{align*}
    p(\psi_{w_1}\cdots\psi_{w_n} \mid \psi_0, \psi_{t+})
     & \propto e^{-\frac{1 + \frac{1}{c}}{2} \sum_{i=1}^n \|\frac{c}{c+1}A \psi_{w_i} - \psi_{w_{i+1}}\|_2^2},
\end{align*}
where $\psi_{w_0} = \psi_0$ and $\psi_{w_{n+1}} = \psi_{t+}$.
This corresponds to a chain graphical model with edge potentials $f(\psi, \psi') = e^{-\frac{1 + \frac{1}{c}}{2} \|\frac{c}{c+1}A \psi - \psi'\|_2^2}$.

\paragraph{Special case.} To build intuition, consider the special case where $A$ is a rotation matrix and $c$ is very large, so $\frac{c}{c+1}A^T A + \frac{c+1}{c} \approx 2I$. In this case, $\Sigma^{-1}$ is a (block) second difference matrix~\citep{higham2022what}:
\begin{equation*}
    \Sigma^{-1} = \left(\begin{smallmatrix}
            2I & -I & & \\
            -I & 2I & -I & \\[-1ex]
            & & & \ddots
        \end{smallmatrix}\right).
\end{equation*}
The inverse of this matrix has a closed form solution~\citep[Pg.~471]{newman1958evaluation}, allowing us to obtain the mean of each waypoint in closed form:
\begin{equation}
    \mu_i = (1 - \lambda(i)) A \psi_0 + \lambda(i) A^T \psi_{t+}, \label{eq:special-case}
\end{equation}
where $\lambda(i) = \tfrac{i}{n+1}$.
Thus, each posterior mean is a convex combination of the (forward prediction from the) initial representation and the (backwards prediction from the) final representation. When $A$ is the identity matrix, the posterior mean is simple linear interpolation between the initial and final representations!

\begin{figure*}
    \centering
    \includegraphics[width=\linewidth]{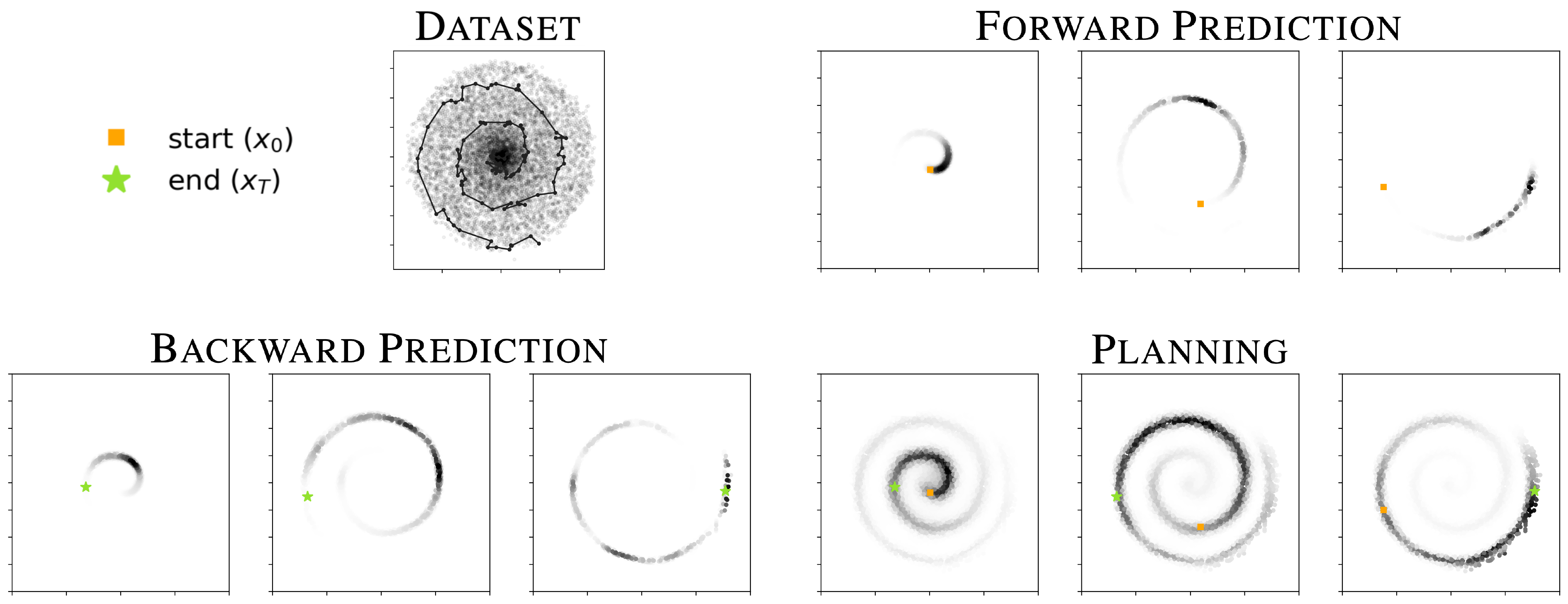}
    \caption{\textbf{Numerical simulation of our analysis.}
        \figtopleft \, Toy dataset of time-series data consisting of many outwardly-spiraling trajectories. We apply temporal contrastive learning to these data.
        \figtopright \, For three initial observations (\raisebox{-0.18ex}{\color{orange}$\blacksquare$}), we use the learned representations to predict the distribution over future observations. Note that these distributions correctly capture the spiral structure.
        \figbottomleft \, For three observations ({\color{lime}$\star$}), we use the learned representations to predict the distribution over preceding observations.
        \figbottomright \, Given an initial and final observation, we plot the inferred posterior distribution over the waypoint (\cref{sec:planning}). The representations capture the shape of the distribution.
    }
    \label{fig:spiral}
\end{figure*}

\section{Numerical Simulation}
\label{sec:experiments}

We include several didactic experiments to illustrate our results. %
All results and figures can be reproduced by running \texttt{make} in the source code: \URL. The expected compute time is a few hours on a A6000 GPU. Figures in this section show error across different training and dataset split seeds.

\subsection{Synthetic Dataset}
To validate our analysis, we design a time series task with 2D points where inference over intermediate points (i.e., in-filling) requires nonlinear interpolation.
\cref{fig:spiral} \textit{(Top Left)} shows the dataset of time series data, starting at the origin and spiraling outwards, with each trajectory using a randomly-chosen initial angle.
We applied contrastive learning with the parametrization in \cref{sec:parametrization} to these data and used the learned representations to solve prediction and planning problems (see \cref{fig:spiral} for details).
Note that these predictions correctly handle the nonlinear structure of these data \-- states nearby the initial state in Euclidean space that are not temporally adjacent are assigned low likelihood.

\begin{figure}[htb]
    \centering
    \ignorespaces
    \begin{minipage}[m]{.25\linewidth}
        \centering
        \includegraphics[width=.9\linewidth]{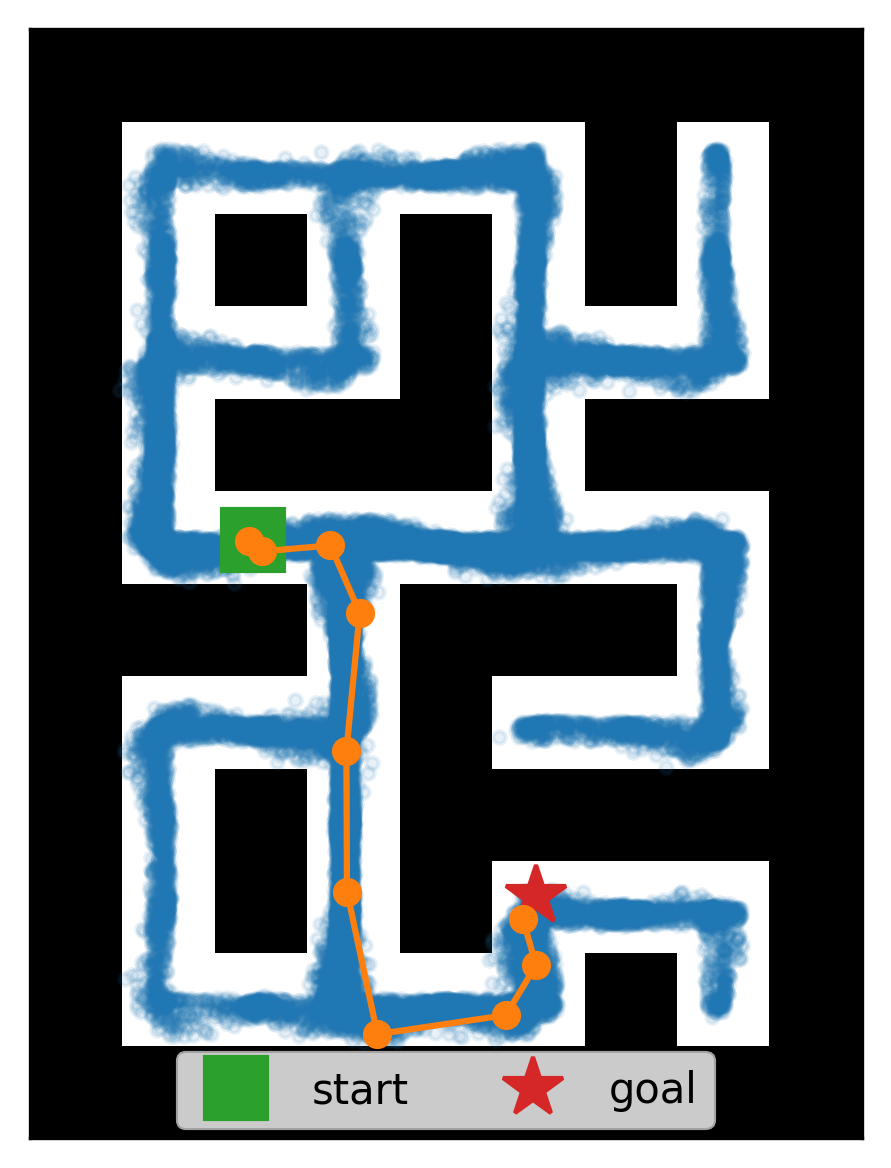}%
    \end{minipage}
    \hspace{1em}
    \begin{minipage}[m]{.6\linewidth}
        \centering
        \includegraphics{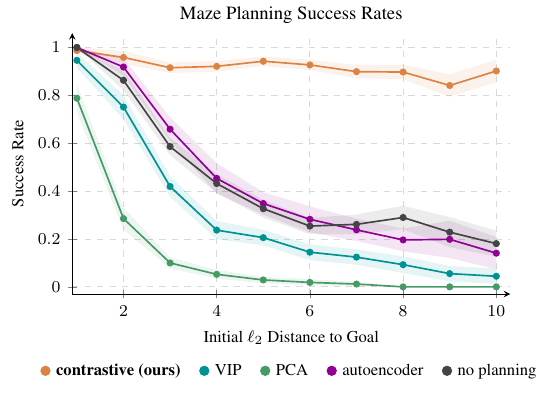}
    \end{minipage}
    \caption{Using inferred paths over our contrastive representations for control boosts success rates by $4.5\times$ on the most difficult goals ($18\% \rightarrow 84\%$). Alternative representation learning techniques fail to improve performance when used for planning. }
    \label{fig:maze-results}
\end{figure}

\subsection{Solving Mazes with Inferred Representations}

Our next experiment studies whether the inferred representations are useful for solving a control task.
We took a 2d maze environment and dataset from prior work (\cref{fig:maze-results}, \textit{Left})~\citep{fu2020d4rl} and learned encoders from this dataset.
To solve the\nobreak\ maze, we\nobreak\ take the observation of the starting state and goal state, compute the representations of these states, and use the analysis in \cref{sec:planning} to infer the sequence of intermediate representations.
We visualize the results using a nearest neighbor retrieval (\cref{fig:maze-results}, \textit{Left}).
\Cref{fig:maze} contains additional examples.

Finally, we studied whether these representations are useful for control. We implemented a simple proportional controller for this maze. As expected, this proportional controller can successfully navigate to close goals, but fails to reach distant goals (\cref{fig:maze-results}, \textit{Right}). However, if we use the proportional controller to track a series of waypoints planned using our representations (i.e., the orange dots shown in \cref{fig:maze-results} \textit{(Left)}), the success rate increases by up to $4.5\times$.
To test the importance of \emph{nonlinear} representations, we compare with a ``PCA'' baseline that predicts waypoints by interpolating between the principal components of the initial state and goal state. The better performance of our method indicates the importance of doing the interpolation using representations that are \emph{nonlinear} functions of the input observations.
While prior methods learn representations to encode temporal distances, it is unclear whether these methods support inference via interpolation. To test this hypothesis, we use one of these methods (``VIP''~\citep{ma2022vip}) as a baseline. While the VIP representations likely encode similar bits as our representations, the better performance of the contrastive representations indicates that the VIP representations do not expose those bits in a way that makes planning easy.

\begin{figure*}
    \centering
    \begin{subfigure}[c]{0.22\linewidth}
        \centering
        \hspace*{0.5cm}\includegraphics[width=\linewidth]{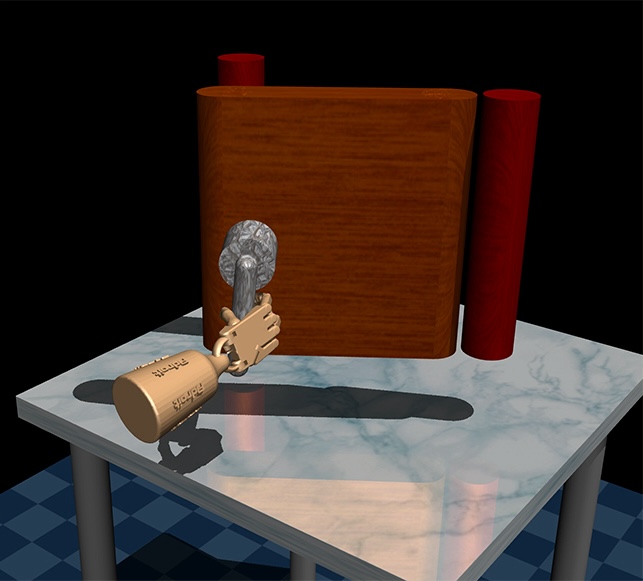}%
        \vspace*{3pt}%
    \end{subfigure}%
    \hspace*{1.5cm}%
    \begin{subfigure}[c]{0.7\linewidth}%
        \centering%
        \includegraphics[width=\linewidth]{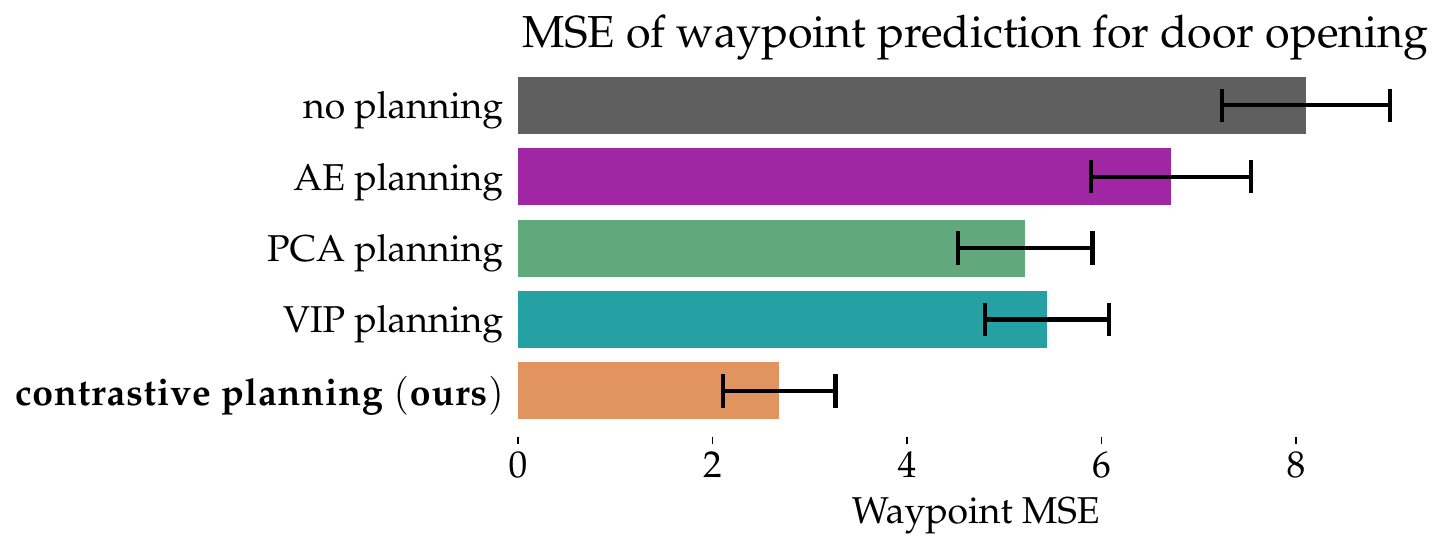}%
    \end{subfigure}\par
    \begin{subfigure}[c]{\linewidth}
        \vspace*{1ex}
        \centering
        \includegraphics[width=\linewidth]{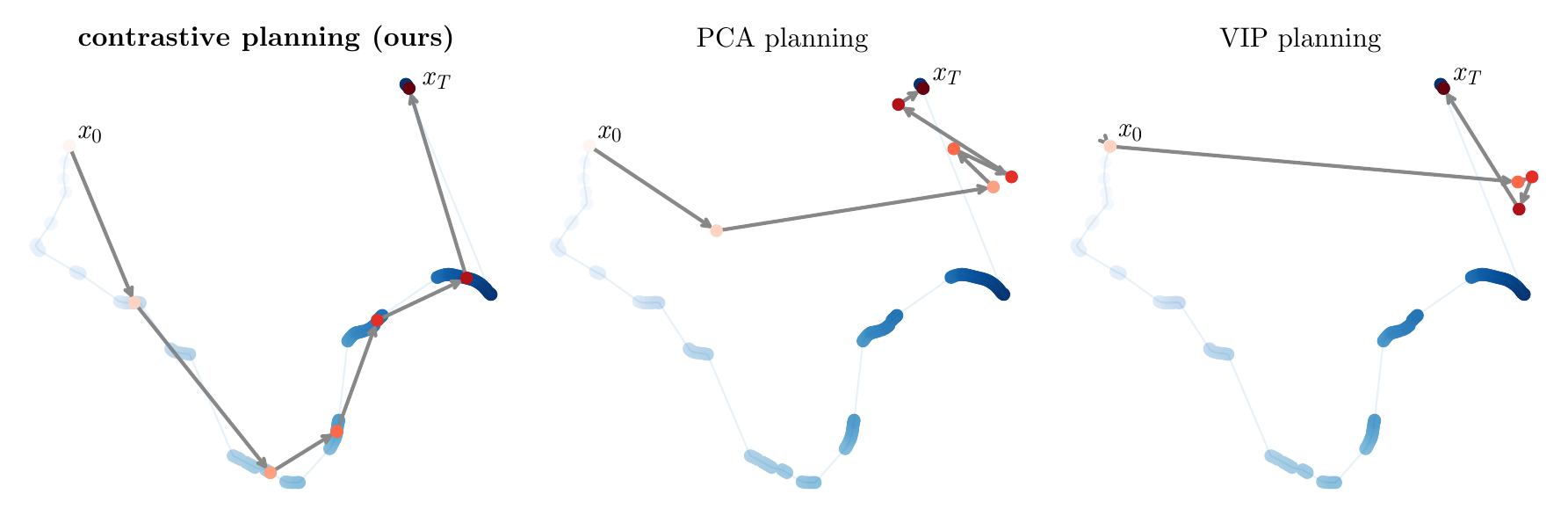}
    \end{subfigure}
    \caption{Planning for 39-dimensional robotic door opening.
        \figtopleft \, We use a dataset of trajectories demonstrating door opening from prior work~\citep{fu2020d4rl} to learn representations.
        \figtopright \, We use our method and three baselines to infer one intermediate waypoint between the first and last observation in a trajectory from a held-out validation set. Errors are measured using the mean squared error with the true waypoint observation; predicted representations are converted to observations using nearest neighbors on a validation set.
        \figbottom \, We visualize a TSNE~\citep{van2008visualizing} of the states along the sampled trajectory as blue circles, with the transparency indicating the index along the trajectory. The inferred plan is shown as red circles connected by arrows. Our method generates better plans than alternative representation learning methods (PCA, VIP).
    }
    \label{fig:door}
\end{figure*}

\subsection{Higher dimensional tasks}
\label{sec:high-dim}

In this section we provide preliminary experiments showing the planning approach in \cref{sec:method} scales to higher dimensional tasks. We used two datasets from prior work~\citep{fu2020d4rl}: \texttt{door-human-v0} (39-dimensional observations) and \texttt{hammer-human-v0} (46-dimensional observations).
After learning {encoders} on these tasks, we evaluated the inference capabilities of the learned representations. Given the first and last observation from a trajectory in a validation set, we use linear interpolation (see \refas{Eq.}{eq:special-case}) to infer the representation of five intermediate waypoint representations.

We evaluate performance in two ways. \textbf{Quantitatively}, we measure the mean squared error between each of the true waypoint observations and those inferred by our method. Since our method infers representations, rather than observations, we use a nearest-neighbor retrieval on a validation set so that we can measure errors in the space of observations. \textbf{Qualitatively}, we visualize the high-dimensional observations from the validation trajectory using a 2-dimensional TSNE~\citep{van2008visualizing} embedding, overlying the infer waypoints from our method; as before, we convert the representations inferred by our method to observations using nearest neighbors.

We compare with three alternative methods in \cref{fig:door}. To test the importance of representation learning, we first na\"ively interpolate between the initial and final observations (``no planning''). The poor performance of this baseline indicates that the input time series are highly nonlinear. Similarly, interpolating the principle components of the initial and final observations (``PCA'') performs poorly, again highlighting that the input time series is highly nonlinear and that our representations are doing more than denoising (i.e., discarding directions of small variation). The third baseline, ``VIP''~\citep{ma2022vip}, learns representations to encode temporal distances using approximate dynamic programming. Like our method, VIP avoids reconstruction and learns nonlinear representations of the observations. However, the results in \cref{fig:door} highlight that VIP's representations do not allow users to plan by interpolation.
The error bars shown in \cref{fig:door} (\emph{Top Right}) show the standard deviation over 500 trajectories sampled from the validation set. For reproducibility, we repeated this entire experiment on another task, the 46-dimensional \texttt{hammer-human-v0} from D4RL. The results, shown in Appendix \cref{fig:hammer}, support the conclusions above.
Taken together, these results show that our procedure for interpolating contrastive representations continues to be effective on tasks where observations have dozens of dimensions.

\section{Discussion}
\label{sec:conclusion}

Representation learning is at the core of many high-dimensional time-series modeling questions, yet how those representations are learned is often disconnected with the inferential task. The main contribution of this paper is to show how \emph{discriminative} techniques can be used to acquire compact representations that make it easy to answer inferential questions about time. The precise objective and parametrization we studied is not much different from that used in practice, suggesting that either our theoretical results might be adapted to the existing methods, or that practitioners might adopt these details so they can use the closed-form solutions to inference questions.
Our work may also have implications for studying the structure of learned representations.
While prior work often studies the geometry of representations as a post-hoc check, our analysis provides tools for studying \emph{when} interpolation properties are guaranteed to emerge, as well as \emph{how} to learn representations with certain desired geometric properties.

\paragraph{Limitations.}
Our analysis hinges on the two assumptions mentioned in \cref{sec:assumptions}, and it remains open how errors in those approximations translate into errors in our analysis.
One important open question is whether it is always possible to satisfy these assumptions using sufficiently-expressive representations.

\section*{Acknowledgments}
We thank Seohong Park, Gautam Reddy, Chongyi Zheng, and anonymous reviewers for feedback and discussions that shaped this project. This work was supported by Princeton Research Computing resources at Princeton University. This work was partially supported by AFOSR FA9550-22-1-0273.

\bibliographystyle{custom}
\bibsep=3pt
\bibliography{references}

\begin{thebibliography}{103}
\providecommand{\natexlab}[1]{#1}
\providecommand{\eprint}[2][]{\text{#1:#2}}

\bibitem[Theodorou et~al., 2010]{theodorou2010generalized}
Evangelos Theodorou, Jonas Buchli, and Stefan Schaal.
\newblock A {Generalized Path Integral Control Approach} to {Reinforcement Learning}.
\newblock \emph{{Journal} of {Machine Learning Research}}, 11:3137--3181, 2010.

\bibitem[J\'{o}nsson et~al., 1998]{jonsson1998nudged}
Hannes J\'{o}nsson, Greg Mills, and Karsten~W Jacobsen.
\newblock {Nudged Elastic Band Method} for {Finding Minimum Energy Paths} of {Transitions}.
\newblock \emph{{Classical} and {Quantum Dynamics} in {Condensed Phase Simulations}}, pp. 385--404. World Scientific, 1998.

\bibitem[Saelens et~al., 2019]{saelens2019comparison}
Wouter Saelens, Robrecht Cannoodt, Helena Todorov, and Yvan Saeys.
\newblock A {Comparison} of {Single-Cell Trajectory Inference Methods}.
\newblock \emph{{Nature Biotechnology}}, 37(5):547--554, 2019.

\bibitem[Majewski et~al., 2017]{majewski2017apache}
Steven~R Majewski, Ricardo~P Schiavon, Peter~M Frinchaboy, Carlos~Allende Prieto, Robert Barkhouser, Dmitry Bizyaev, Basil Blank, Sophia Brunner, Adam Burton, Ricardo Carrera, et~al.
\newblock {The Apache Point Observatory Galactic Evolution Experiment} ({APOGEE}).
\newblock \emph{{Astronomical Journal}}, 154(3):94, 2017.

\bibitem[Zhao et~al., 2017]{zhao2017towards}
Shengjia Zhao, Jiaming Song, and Stefano Ermon.
\newblock {Towards Deeper Understanding} of {Variational Autoencoding Models}.
\newblock arXiv:1702.08658, 2017.

\bibitem[Zhu et~al., 2020]{zhu2020s3vae}
Yizhe Zhu, Martin~Renqiang Min, Asim Kadav, and Hans~Peter Graf.
\newblock {S3vae}: {Self-Supervised Sequential VAE} for {Representation Disentanglement} and {Data Generation}.
\newblock \emph{{IEEE}/{CVF Conference} on {Computer Vision} and {Pattern Recognition}}, pp. 6538--6547, 2020.

\bibitem[Makhzani et~al., 2015]{makhzani2015adversarial}
Alireza Makhzani, Jonathon Shlens, Navdeep Jaitly, Ian Goodfellow, and Brendan Frey.
\newblock {Adversarial Autoencoders}.
\newblock arXiv:1511.05644, 2015.

\bibitem[Dumoulin et~al., 2016]{dumoulin2016adversarially}
Vincent Dumoulin, Ishmael Belghazi, Ben Poole, Alex Lamb, Martin Arjovsky, Olivier Mastropietro, and Aaron Courville.
\newblock {Adversarially Learned Inference}.
\newblock \emph{{International Conference} on {Learning Representations}}, 2016.

\bibitem[Razavi et~al., 2019]{razavi2019generating}
Ali Razavi, Aaron Van~den Oord, and Oriol Vinyals.
\newblock {Generating Diverse High-Fidelity Images With Vq-VAE-}2.
\newblock \emph{{Neural Information Processing Systems}}, 32, 2019.

\bibitem[Oord et~al., 2018]{oord2018representation}
Aaron van~den Oord, Yazhe Li, and Oriol Vinyals.
\newblock {Representation Learning With Contrastive Predictive Coding}.
\newblock arXiv:1807.03748, 2018.

\bibitem[Sohn, 2016]{sohn2016improved}
Kihyuk Sohn.
\newblock \href{https://papers.nips.cc/paper_files/paper/2016/hash/6b180037abbebea991d8b1232f8a8ca9-Abstract.html}{{Improved Deep Metric Learning With Multi-Class N-Pair Loss Objective}}.
\newblock \emph{{Neural Information Processing Systems}}, 29, 2016.

\bibitem[Chen et~al., 2020{\natexlab{a}}]{chen2020improved}
Xinlei Chen, Haoqi Fan, Ross Girshick, and Kaiming He.
\newblock \href{http://arxiv.org/abs/2003.04297}{{Improved Baselines With Momentum Contrastive Learning}}.
\newblock arXiv:2003.04297, 2020{\natexlab{a}}.

\bibitem[Tian et~al., 2020{\natexlab{a}}]{tian2020contrastive}
Yonglong Tian, Dilip Krishnan, and Phillip Isola.
\newblock {Contrastive Multiview Coding}.
\newblock \emph{{Computer Vision--ECCV} 2020: 16th {European Conference}}, pp. 776--794, 2020{\natexlab{a}}.

\bibitem[Wu et~al., 2018]{wu2018unsupervised}
Zhirong Wu, Yuanjun Xiong, Stella~X Yu, and Dahua Lin.
\newblock \href{https://openaccess.thecvf.com/content_cvpr_2018/html/Wu_Unsupervised_Feature_Learning_CVPR_2018_paper.html}{{Unsupervised Feature Learning} via {Non-Parametric Instance Discrimination}}.
\newblock \emph{{IEEE Conference} on {Computer Vision} and {Pattern Recognition}}, pp. 3733--3742, 2018.

\bibitem[Chen et~al., 2020{\natexlab{b}}]{chen2020simple}
Ting Chen, Simon Kornblith, Mohammad Norouzi, and Geoffrey Hinton.
\newblock A {Simple Framework} for {Contrastive Learning} of {Visual Representations}.
\newblock \emph{{International Conference} on {Machine Learning}}, pp. 1597--1607, 2020{\natexlab{b}}.

\bibitem[Mikolov et~al., 2013]{mikolov2013efficient}
Tomas Mikolov, Kai Chen, Greg Corrado, and Jeffrey Dean.
\newblock {Efficient Estimation} of {Word Representations} in {Vector Space}.
\newblock arXiv:1301.3781, 2013.

\bibitem[Radford et~al., 2021]{radford2021learning}
Alec Radford, Jong~Wook Kim, Chris Hallacy, Aditya Ramesh, Gabriel Goh, Sandhini Agarwal, Girish Sastry, Amanda Askell, Pamela Mishkin, Jack Clark, et~al.
\newblock {Learning Transferable Visual Models From Natural Language Supervision}.
\newblock \emph{{International Conference} on {Machine Learning}}, pp. 8748--8763, 2021.

\bibitem[Botvinick and Toussaint, 2012]{botvinick2012planning}
Matthew Botvinick and Marc Toussaint.
\newblock {Planning} as {Inference}.
\newblock \emph{{Trends} in {Cognitive Sciences}}, 16(10):485--488, 2012.

\bibitem[Attias, 2003]{attiasPlanningProbabilisticInference2003c}
Hagai Attias.
\newblock {Planning} by {Probabilistic Inference}.
\newblock \emph{{International Workshop} on {Artificial Intelligence} and {Statistics}}, pp. 9--16, 2003.

\bibitem[Wang and Isola, 2020]{wang2020understanding}
Tongzhou Wang and Phillip Isola.
\newblock \href{http://arxiv.org/abs/2005.10242}{{Understanding Contrastive Representation Learning Through Alignment} and {Uniformity} on the {Hypersphere}}.
\newblock \emph{{International Conference} on {Machine Learning}}, pp. 9929--9939, 2020.

\bibitem[Uhlenbeck and Ornstein, 1930]{uhlenbeck1930theory}
George~E Uhlenbeck and Leonard~S Ornstein.
\newblock {On} the {Theory} of the {Brownian Motion}.
\newblock \emph{{Physical Review}}, 36(5):823, 1930.

\bibitem[Box et~al., 2015]{box2015time}
George~EP Box, Gwilym~M Jenkins, Gregory~C Reinsel, and Greta~M Ljung.
\newblock \emph{{Time Series Analysis}: {Forecasting} and {Control}}.
\newblock John Wiley \& Sons, 2015.

\bibitem[Malioutov et~al., 2006]{malioutov2006walk}
Dmitry~M Malioutov, Jason~K Johnson, and Alan~S Willsky.
\newblock {Walk-Sums} and {Belief Propagation} in {Gaussian Graphical Models}.
\newblock \emph{{Journal} of {Machine Learning Research}}, 7:2031--2064, 2006.

\bibitem[Weiss and Freeman, 1999]{weiss1999correctness}
Yair Weiss and William Freeman.
\newblock {Correctness} of {Belief Propagation} in {Gaussian Graphical Models} of {Arbitrary Topology}.
\newblock \emph{{Neural Information Processing Systems}}, 12, 1999.

\bibitem[Qian et~al., 2021]{qianSpatiotemporalContrastiveVideo2021}
Rui Qian, Tianjian Meng, Boqing Gong, Ming-Hsuan Yang, Huisheng Wang, Serge Belongie, and Yin Cui.
\newblock {Spatiotemporal Contrastive Video Representation Learning}.
\newblock \emph{{IEEE}/{CVF Conference} on {Computer Vision} and {Pattern Recognition}}, pp. 6964--6974, 2021.

\bibitem[Eysenbach et~al., 2020]{eysenbach2020c}
Benjamin Eysenbach, Ruslan Salakhutdinov, and Sergey Levine.
\newblock \href{http://arxiv.org/abs/2011.08909}{{C-Learning}: {Learning} to {Achieve Goals} via {Recursive Classification}}.
\newblock \emph{{International Conference} on {Learning Representations}}, 2020.

\bibitem[Karamcheti et~al., 2023]{karamchetiLanguageDrivenRepresentationLearning2023}
Siddharth Karamcheti, Suraj Nair, Annie~S. Chen, Thomas Kollar, Chelsea Finn, Dorsa Sadigh, and Percy Liang.
\newblock \href{http://arxiv.org/abs/2302.12766}{{Language-Driven Representation Learning} for {Robotics}}.
\newblock arXiv:2302.12766, 2023.

\bibitem[Park and Lee, 2021]{parkFinetuningPretrainedTransformers2021}
Seongmin Park and Jihwa Lee.
\newblock \href{http://arxiv.org/abs/2108.02446}{{Finetuning Pretrained Transformers Into Variational Autoencoders}}.
\newblock arXiv:2108.02446, 2021.

\bibitem[Kenton and Toutanova, 2019]{devlinBERTPretrainingDeep2019}
Jacob Devlin Ming-Wei~Chang Kenton and Lee~Kristina Toutanova.
\newblock {BERT}: {Pre-Training} of {Deep Bidirectional Transformers} for {Language Understanding}.
\newblock \emph{{NAACL-Hlt}}, pp. 4171--4186, 2019.

\bibitem[Carroll et~al., 2022]{carrollUniMASKUnifiedInference2022a}
Micah Carroll, Orr Paradise, Jessy Lin, Raluca Georgescu, Mingfei Sun, David Bignell, Stephanie Milani, Katja Hofmann, Matthew Hausknecht, Anca Dragan, et~al.
\newblock \href{http://dx.doi.org/10.48550/arXiv.2211.10869}{{UniMASK}: {Unified Inference} in {Sequential Decision Problems}}.
\newblock arXiv:2211.10869, 2022.

\bibitem[Chung et~al., 2015]{chungRecurrentLatentVariable2015}
Junyoung Chung, Kyle Kastner, Laurent Dinh, Kratarth Goel, Aaron~C Courville, and Yoshua Bengio.
\newblock A {Recurrent Latent Variable Model} for {Sequential Data}.
\newblock \emph{{Neural Information Processing Systems}}, volume~28, 2015.

\bibitem[Shu et~al., 2020]{shu2020predictive}
Rui Shu, Tung Nguyen, Yinlam Chow, Tuan Pham, Khoat Than, Mohammad Ghavamzadeh, Stefano Ermon, and Hung Bui.
\newblock {Predictive Coding} for {Locally-Linear Control}.
\newblock \emph{{International Conference} on {Machine Learning}}, pp. 8862--8871, 2020.

\bibitem[Watter et~al., 2015]{watter2015embed}
Manuel Watter, Jost Springenberg, Joschka Boedecker, and Martin Riedmiller.
\newblock {Embed} to {Control}: A {Locally Linear Latent Dynamics Model} for {Control From Raw Images}.
\newblock \emph{{Neural Information Processing Systems}}, 28, 2015.

\bibitem[Banijamali et~al., 2018]{banijamali2018robust}
Ershad Banijamali, Rui Shu, Hung Bui, Ali Ghodsi, et~al.
\newblock {Robust Locally-Linear Controllable Embedding}.
\newblock \emph{{International Conference} on {Artificial Intelligence} and {Statistics}}, pp. 1751--1759, 2018.

\bibitem[Cui et~al., 2020]{cui2020control}
Brandon Cui, Yinlam Chow, and Mohammad Ghavamzadeh.
\newblock {Control-Aware Representations} for {Model-Based Reinforcement Learning}.
\newblock \emph{{International Conference} on {Learning Representations}}, 2020.

\bibitem[Nguyen et~al., 2021]{nguyen2021temporal}
Tung~D Nguyen, Rui Shu, Tuan Pham, Hung Bui, and Stefano Ermon.
\newblock {Temporal Predictive Coding} for {Model-Based Planning} in {Latent Space}.
\newblock \emph{{International Conference} on {Machine Learning}}, pp. 8130--8139, 2021.

\bibitem[Nguyen et~al., 2020]{nguyen2020non}
Tung Nguyen, Rui Shu, Tuan Pham, Hung Bui, and Stefano Ermon.
\newblock \href{https://openreview.net/forum?id=6KZ_kUVCfTa}{{Non-Markovian Predictive Coding} for {Planning} in {Latent Space}}.
\newblock 2020.

\bibitem[Goroshin et~al., 2015]{goroshin2015learning}
Ross Goroshin, Michael~F Mathieu, and Yann LeCun.
\newblock {Learning} to {Linearize Under Uncertainty}.
\newblock \emph{{Neural Information Processing Systems}}, 28, 2015.

\bibitem[Jayaraman and Grauman, 2016]{jayaraman2016slow}
Dinesh Jayaraman and Kristen Grauman.
\newblock {Slow} and {Steady Feature Analysis}: {Higher Order Temporal Coherence} in {Video}.
\newblock \emph{{IEEE Conference} on {Computer Vision} and {Pattern Recognition}}, pp. 3852--3861, 2016.

\bibitem[Gutmann and Hyv\"{a}rinen, 2010]{gutmannNoisecontrastiveEstimationNew2010}
Michael Gutmann and Aapo Hyv\"{a}rinen.
\newblock \href{https://proceedings.mlr.press/v9/gutmann10a.html}{{Noise-Contrastive Estimation}: A {New Estimation Principle} for {Unnormalized Statistical Models}}.
\newblock \emph{{International Conference} on {Artificial Intelligence} and {Statistics}}, pp. 297--304, 2010.

\bibitem[Chen and He, 2021]{chenExploringSimpleSiamese2020a}
Xinlei Chen and Kaiming He.
\newblock {Exploring Simple Siamese Representation Learning}.
\newblock \emph{{IEEE}/{CVF Conference} on {Computer Vision} and {Pattern Recognition}}, pp. 15750--15758, 2021.

\bibitem[Sermanet et~al., 2018]{sermanet2018time}
Pierre Sermanet, Corey Lynch, Yevgen Chebotar, Jasmine Hsu, Eric Jang, Stefan Schaal, Sergey Levine, and Google Brain.
\newblock \href{http://dx.doi.org/10.1109/ICRA.2018.8462891}{{Time-Contrastive Networks}: {Self-Supervised Learning From Video}}.
\newblock \emph{{IEEE International Conference} on {Robotics} and {Automation}}, pp. 1134--1141, 2018.

\bibitem[Eysenbach et~al., 2022]{eysenbach2022contrastive}
Benjamin Eysenbach, Tianjun Zhang, Sergey Levine, and Russ~R Salakhutdinov.
\newblock {Contrastive Learning} as {Goal-Conditioned Reinforcement Learning}.
\newblock \emph{{Neural Information Processing Systems}}, 35:35603--35620, 2022.

\bibitem[Xu et~al., 2023]{xuXSkillCrossEmbodiment2023}
Mengda Xu, Zhenjia Xu, Cheng Chi, Manuela Veloso, and Shuran Song.
\newblock {Xskill}: {Cross Embodiment Skill Discovery}.
\newblock \emph{{Conference} on {Robot Learning}}, pp. 3536--3555, 2023.

\bibitem[Wiskott and Sejnowski, 2002]{wiskottSlowFeatureAnalysis2002}
Laurenz Wiskott and Terrence~J. Sejnowski.
\newblock {Slow Feature Analysis}: {Unsupervised Learning} of {Invariances}.
\newblock \emph{{Neural Computation}}, 14(4):715--770, 2002.

\bibitem[Yan et~al., 2021]{yanSemanticsGuidedRepresentationLearning2021}
Jia-Wei Yan, Ci-Siang Lin, Fu-En Yang, Yu-Jhe Li, and Yu-Chiang Frank~Wang.
\newblock {Semantics-Guided Representation Learning With Applications} to {Visual Synthesis}.
\newblock \emph{{International Conference} on {Pattern Recognition}}, pp. 7181--7187, 2021.

\bibitem[Oring et~al., 2021]{oringAutoencoderImageInterpolation2020}
Alon Oring, Zohar Yakhini, and Yacov Hel-Or.
\newblock {Autoencoder Image Interpolation} by {Shaping} the {Latent Space}.
\newblock \emph{{International Conference} on {Machine Learning}}, pp. 8281--8290, 2021.

\bibitem[Chen et~al., 2019]{chenHomomorphicLatentSpace2019}
Ying-Cong Chen, Xiaogang Xu, Zhuotao Tian, and Jiaya Jia.
\newblock {Homomorphic Latent Space Interpolation} for {Unpaired Image-to-Image Translation}.
\newblock \emph{{IEEE}/{CVF Conference} on {Computer Vision} and {Pattern Recognition}}, pp. 2403--2411, 2019.

\bibitem[Liu et~al., 2018]{liuDataAugmentationLatent2018}
Xiaofeng Liu, Yang Zou, Lingsheng Kong, Zhihui Diao, Junliang Yan, Jun Wang, Site Li, Ping Jia, and Jane You.
\newblock {Data Augmentation} via {Latent Space Interpolation} for {Image Classification}.
\newblock \emph{{International Conference} on {Pattern Recognition}}, pp. 728--733, 2018.

\bibitem[Levy and Goldberg, 2014]{levy2014linguistic}
Omer Levy and Yoav Goldberg.
\newblock {Linguistic Regularities} in {Sparse} and {Explicit Word Representations}.
\newblock \emph{{Computational Natural Language Learning}}, pp. 171--180, 2014.

\bibitem[Arora et~al., 2016]{arora2016latent}
Sanjeev Arora, Yuanzhi Li, Yingyu Liang, Tengyu Ma, and Andrej Risteski.
\newblock A {Latent Variable Model Approach} to {Pmi-Based Word Embeddings}.
\newblock \emph{{Transactions} of the {Association} for {Computational Linguistics}}, 4:385--399, 2016.

\bibitem[Allen and Hospedales, 2019]{allen2019analogies}
Carl Allen and Timothy Hospedales.
\newblock {Analogies Explained}: {Towards Understanding Word Embeddings}.
\newblock \emph{{International Conference} on {Machine Learning}}, pp. 223--231, 2019.

\bibitem[Hashimoto et~al., 2016]{hashimoto2016word}
Tatsunori~B Hashimoto, David Alvarez-Melis, and Tommi~S Jaakkola.
\newblock {Word Embeddings} as {Metric Recovery} in {Semantic Spaces}.
\newblock \emph{{Transactions} of the {Association} for {Computational Linguistics}}, 4:273--286, 2016.

\bibitem[Newell et~al., 1959]{newell1959report}
Allen Newell, John~C Shaw, and Herbert~A Simon.
\newblock \href{https://purl.stanford.edu/sy501xd1313}{{Report} on a {General Problem Solving Program}}.
\newblock \emph{{IFIP Congress}}, volume 256, p.~64, 1959.

\bibitem[Laird et~al., 1987]{laird1987soar}
John~E Laird, Allen Newell, and Paul~S Rosenbloom.
\newblock \href{http://dx.doi.org/10.1016/0004-3702(87)90050-6}{{Soar}: {An Architecture} for {General Intelligence}}.
\newblock \emph{{Artificial Intelligence}}, 33(1):1--64, 1987.

\bibitem[Chen et~al., 2021]{chen2021decision}
Lili Chen, Kevin Lu, Aravind Rajeswaran, Kimin Lee, Aditya Grover, Misha Laskin, Pieter Abbeel, Aravind Srinivas, and Igor Mordatch.
\newblock \href{https://proceedings.neurips.cc/paper/2021/hash/7f489f642a0ddb10272b5c31057f0663-Abstract.html}{{Decision Transformer}: {Reinforcement Learning} via {Sequence Modeling}}.
\newblock \emph{{Neural Information Processing Systems}}, 34:15084--15097, 2021.

\bibitem[Chane-Sane et~al., 2021]{chane2021goal}
Elliot Chane-Sane, Cordelia Schmid, and Ivan Laptev.
\newblock {Goal-Conditioned Reinforcement Learning With Imagined Subgoals}.
\newblock \emph{{International Conference} on {Machine Learning}}, pp. 1430--1440, 2021.

\bibitem[Colas et~al., 2021]{colas2021intrinsically}
C\'{e}dric Colas, Tristan Karch, Olivier Sigaud, and Pierre-Yves Oudeyer.
\newblock \href{http://dx.doi.org/10.1613/JAIR.1.13554}{{Intrinsically Motivated Goal-Conditioned Reinforcement Learning}: A {Short Survey}}.
\newblock \emph{{Preprint}}, 2021.

\bibitem[Yang et~al., 2021]{yang2022rethinking}
Rui Yang, Yiming Lu, Wenzhe Li, Hao Sun, Meng Fang, Yali Du, Xiu Li, Lei Han, and Chongjie Zhang.
\newblock {Rethinking Goal-Conditioned Supervised Learning} and {Its Connection} to {Offline RL}.
\newblock \emph{{International Conference} on {Learning Representations}}, 2021.

\bibitem[Ma et~al., 2022{\natexlab{a}}]{ma2022far}
Yecheng~Jason Ma, Jason Yan, Dinesh Jayaraman, and Osbert Bastani.
\newblock \href{http://arxiv.org/abs/2206.03023}{{How Far} {I}'ll {Go}: {Offline Goal-Conditioned Reinforcement Learning} via $ F $-{Advantage Regression}}.
\newblock arXiv:2206.03023, 2022{\natexlab{a}}.

\bibitem[Schroecker and Isbell, 2020]{schroecker2020universal}
Yannick Schroecker and Charles Isbell.
\newblock \href{http://arxiv.org/abs/2002.06473}{{Universal Value Density Estimation} for {Imitation Learning} and {Goal-Conditioned Reinforcement Learning}}.
\newblock arXiv:2002.06473, 2020.

\bibitem[Janner et~al., 2021]{janner2021offline}
Michael Janner, Qiyang Li, and Sergey Levine.
\newblock {Offline Reinforcement Learning} as {One Big Sequence Modeling Problem}.
\newblock \emph{{Neural Information Processing Systems}}, 34:1273--1286, 2021.

\bibitem[Hejna et~al., 2023]{hejna2023distance}
Joey Hejna, Jensen Gao, and Dorsa Sadigh.
\newblock \href{https://proceedings.mlr.press/v202/hejna23a.html}{{Distance Weighted Supervised Learning} for {Offline Interaction Data}}.
\newblock arXiv:2304.13774, 2023.

\bibitem[Rombach et~al., 2022]{rombach2022high}
Robin Rombach, Andreas Blattmann, Dominik Lorenz, Patrick Esser, and Bj\"{o}rn Ommer.
\newblock \href{http://arxiv.org/abs/2112.10752}{{High-Resolution Image Synthesis With Latent Diffusion Models}}.
\newblock \emph{{IEEE}/{CVF Conference} on {Computer Vision} and {Pattern Recognition}}, pp. 10684--10695, 2022.

\bibitem[OpenAI, 2023]{OpenAI2023GPT4TR}
OpenAI.
\newblock {GPT-}4 {Technical Report}.
\newblock arXiv:2303.08774, 2023.

\bibitem[Zheng et~al., 2024]{zhengStabilizingContrastiveRL2023}
Chongyi Zheng, Benjamin Eysenbach, Homer~Rich Walke, Patrick Yin, Kuan Fang, Ruslan Salakhutdinov, and Sergey Levine.
\newblock {Stabilizing Contrastive RL}: {Techniques} for {Robotic Goal Reaching From Offline Data}.
\newblock \emph{{International Conference} on {Learning Representations}}, 2024.

\bibitem[Tian et~al., 2020{\natexlab{b}}]{planningVisual}
Stephen Tian, Suraj Nair, Frederik Ebert, Sudeep Dasari, Benjamin Eysenbach, Chelsea Finn, and Sergey Levine.
\newblock {Model-Based Visual Planning With Self-Supervised Functional Distances}.
\newblock \emph{{International Conference} on {Learning Representations}}, 2020{\natexlab{b}}.

\bibitem[Agarwal et~al., 2019]{agarwal2019reinforcement}
Alekh Agarwal, Nan Jiang, Sham~M Kakade, and Wen Sun.
\newblock {Reinforcement Learning}: {Theory} and {Algorithms}.
\newblock \emph{{CS Dept}}, pp. 10--4, 2019.

\bibitem[Ma et~al., 2022{\natexlab{b}}]{ma2022vip}
Yecheng~Jason Ma, Shagun Sodhani, Dinesh Jayaraman, Osbert Bastani, Vikash Kumar, and Amy Zhang.
\newblock \href{http://arxiv.org/abs/2210.00030}{{VIP}: {Towards Universal Visual Reward} and {Representation} via {Value-Implicit Pre-Training}}.
\newblock \emph{{International Conference} on {Learning Representations}}, 2022{\natexlab{b}}.

\bibitem[Ma et~al., 2023]{LIV}
Yecheng~Jason Ma, Vikash Kumar, Amy Zhang, Osbert Bastani, and Dinesh Jayaraman.
\newblock {LIV}: {Language-Image Representations} and {Rewards} for {Robotic Control}.
\newblock \emph{{International Conference} on {Machine Learning}}, 2023.

\bibitem[Nair et~al., 2023]{nair2022r3m}
Suraj Nair, Aravind Rajeswaran, Vikash Kumar, Chelsea Finn, and Abhinav Gupta.
\newblock {R3M}: A {Universal Visual Representation} for {Robot Manipulation}.
\newblock \emph{{Conference} on {Robot Learning}}, pp. 892--909, 2023.

\bibitem[Liu et~al., 2023]{Liu2022MetricRN}
Bo~Liu, Yihao Feng, Qiang Liu, and Peter Stone.
\newblock \href{https://ojs.aaai.org/index.php/AAAI/article/view/26058}{{Metric Residual Network} for {Sample Efficient Goal-Conditioned Reinforcement Learning}}.
\newblock \emph{{AAAI Conference} on {Artificial Intelligence}}, volume~37, pp. 8799--8806, 2023.

\bibitem[Wang et~al., 2023]{Wang2023OptimalGR}
Tongzhou Wang, Antonio Torralba, Phillip Isola, and Amy Zhang.
\newblock \href{https://proceedings.mlr.press/v202/wang23al.html}{{Optimal Goal-Reaching Reinforcement Learning} via {Quasimetric Learning}}.
\newblock \emph{{International Conference} on {Machine Learning}}, pp. 36411--36430, 2023.

\bibitem[Schwarzer et~al., 2020]{schwarzer2020data}
Max Schwarzer, Ankesh Anand, Rishab Goel, R~Devon Hjelm, Aaron Courville, and Philip Bachman.
\newblock {Data-Efficient Reinforcement Learning With Self-Predictive Representations}.
\newblock \emph{{International Conference} on {Learning Representations}}, 2020.

\bibitem[Tang et~al., 2023]{tang2022understanding}
Yunhao Tang, Zhaohan~Daniel Guo, Pierre~Harvey Richemond, Bernardo~Avila Pires, Yash Chandak, R\'{e}mi Munos, Mark Rowland, Mohammad~Gheshlaghi Azar, Charline Le~Lan, Clare Lyle, et~al.
\newblock {Understanding Self-Predictive Learning} for {Reinforcement Learning}.
\newblock \emph{{International Conference} on {Machine Learning}}, pp. 33632--33656, 2023.

\bibitem[Stooke et~al., 2021]{stooke2021decoupling}
Adam Stooke, Kimin Lee, Pieter Abbeel, and Michael Laskin.
\newblock {Decoupling Representation Learning From Reinforcement Learning}.
\newblock \emph{{International Conference} on {Machine Learning}}, pp. 9870--9879, 2021.

\bibitem[Bharadhwaj et~al., 2021]{bharadhwaj2022information}
Homanga Bharadhwaj, Mohammad Babaeizadeh, Dumitru Erhan, and Sergey Levine.
\newblock {Information Prioritization Through Empowerment} in {Visual Model-Based RL}.
\newblock \emph{{International Conference} on {Learning Representations}}, 2021.

\bibitem[Anand et~al., 2019]{Anand2019UnsupervisedSR}
Ankesh Anand, Evan Racah, Sherjil Ozair, Yoshua Bengio, Marc-Alexandre C\^{o}t\'{e}, and R~Devon Hjelm.
\newblock {Unsupervised State Representation Learning} in {Atari}.
\newblock \emph{{Neural Information Processing Systems}}, 32, 2019.

\bibitem[Castro et~al., 2021]{Castro2021MICoIR}
Pablo~Samuel Castro, Tyler Kastner, P.~Panangaden, and Mark Rowland.
\newblock {MICo}: {Improved Representations} via {Sampling-Based State Similarity} for {Markov Decision Processes}.
\newblock \emph{{Neural Information Processing Systems}}, 2021.

\bibitem[Ghugare et~al., 2022]{ghugare2022simplifying}
Raj Ghugare, Homanga Bharadhwaj, Benjamin Eysenbach, Sergey Levine, and Russ Salakhutdinov.
\newblock {Simplifying Model-Based RL}: {Learning Representations}, {Latent-Space Models}, and {Policies With One Objective}.
\newblock \emph{{International Conference} on {Learning Representations}}, 2022.

\bibitem[Allen, 2021]{Allen2021LearningMS}
Cameron~S. Allen.
\newblock {Learning Markov State Abstractions} for {Deep Reinforcement Learning}.
\newblock \emph{{Neural Information Processing Systems}}, 2021.

\bibitem[Mazoure et~al., 2023]{mazoureContrastiveValueLearning2022}
Bogdan Mazoure, Benjamin Eysenbach, Ofir Nachum, and Jonathan Tompson.
\newblock {Contrastive Value Learning}: {Implicit Models} for {Simple Offline RL}.
\newblock \emph{{Conference} on {Robot Learning}}, pp. 1257--1267, 2023.

\bibitem[Guo et~al., 2022]{Guo2022BYOLExploreEB}
Zhaohan Guo, Shantanu Thakoor, Miruna P\^{\i{}}slar, Bernardo Avila~Pires, Florent Altch\'{e}, Corentin Tallec, Alaa Saade, Daniele Calandriello, Jean-Bastien Grill, Yunhao Tang, et~al.
\newblock {Byol-Explore}: {Exploration} by {Bootstrapped Prediction}.
\newblock \emph{{Neural Information Processing Systems}}, 35:31855--31870, 2022.

\bibitem[Du et~al., 2021]{Du2021CuriousRL}
Yilun Du, Chuang Gan, and Phillip Isola.
\newblock \href{http://dx.doi.org/10.1109/ICCV48922.2021.01024}{{Curious Representation Learning} for {Embodied Intelligence}}.
\newblock \emph{{IEEE}/{CVF International Conference} on {Computer Vision}}, pp. 10388--10397, 2021.

\bibitem[Dayan, 1993]{Dayan1993ImprovingGF}
Peter Dayan.
\newblock \href{http://dx.doi.org/10.1162/neco.1993.5.4.613}{{Improving Generalization} for {Temporal Difference Learning}: {The Successor Representation}}.
\newblock \emph{{Neural Computation}}, 5:613--624, 1993.

\bibitem[Barreto et~al., 2017]{barreto2017successor}
Andr\'{e} Barreto, Will Dabney, R\'{e}mi Munos, Jonathan~J Hunt, Tom Schaul, Hado~P van Hasselt, and David Silver.
\newblock {Successor Features} for {Transfer} in {Reinforcement Learning}.
\newblock \emph{{Neural Information Processing Systems}}, 30, 2017.

\bibitem[Kavraki et~al., 1996]{kavraki1996probabilistic}
Lydia~E Kavraki, Petr Svestka, J-C Latombe, and Mark~H Overmars.
\newblock {Probabilistic Roadmaps} for {Path Planning} in {High-Dimensional Configuration Spaces}.
\newblock \emph{{IEEE Transactions} on {Robotics} and {Automation}}, 12(4):566--580, 1996.

\bibitem[LaValle et~al., 2001]{lavalle2001rapidly}
Steven~M LaValle, James~J Kuffner, BR~Donald, et~al.
\newblock {Rapidly-Exploring Random Trees}: {Progress} and {Prospects}.
\newblock \emph{{Algorithmic} and {Computational Robotics}: {New Directions}}, 5:293--308, 2001.

\bibitem[Thijssen and Kappen, 2015]{thijssenPathIntegralControl2015}
Sep Thijssen and H.~J. Kappen.
\newblock {Path Integral Control} and {State-Dependent Feedback}.
\newblock \emph{{Physical Review} E}, 91(3):032104, 2015.

\bibitem[Williams et~al., 2015]{williamsModelPredictivePath2015}
Grady Williams, Andrew Aldrich, and Evangelos Theodorou.
\newblock \href{http://arxiv.org/abs/1509.01149}{{Model Predictive Path Integral Control Using Covariance Variable Importance Sampling}}.
\newblock arXiv:1509.01149, 2015.

\bibitem[Fang et~al., 2023]{flap}
Kuan Fang, Patrick Yin, Ashvin Nair, Homer~Rich Walke, Gengchen Yan, and Sergey Levine.
\newblock {Generalization With Lossy Affordances}: {Leveraging Broad Offline Data} for {Learning Visuomotor Tasks}.
\newblock \emph{{Conference} on {Robot Learning}}, pp. 106--117, 2023.

\bibitem[Eysenbach et~al., 2019]{eysenbachSearchReplayBuffer2019}
Ben Eysenbach, Russ~R Salakhutdinov, and Sergey Levine.
\newblock {Search} on the {Replay Buffer}: {Bridging Planning} and {Reinforcement Learning}.
\newblock \emph{{Neural Information Processing Systems}}, volume~32, 2019.

\bibitem[Zhang et~al., 2021]{zhang2021c}
Tianjun Zhang, Benjamin Eysenbach, Ruslan Salakhutdinov, Sergey Levine, and Joseph~E Gonzalez.
\newblock {C-Planning}: {An Automatic Curriculum} for {Learning Goal-Reaching Tasks}.
\newblock \emph{{International Conference} on {Learning Representations}}, 2021.

\bibitem[Saunshi et~al., 2019]{aroraTheoreticalAnalysisContrastive2019}
Nikunj Saunshi, Orestis Plevrakis, Sanjeev Arora, Mikhail Khodak, and Hrishikesh Khandeparkar.
\newblock A {Theoretical Analysis} of {Contrastive Unsupervised Representation Learning}.
\newblock \emph{{International Conference} on {Machine Learning}}, pp. 5628--5637, 2019.

\bibitem[Shannon, 1948]{shannon1948mathematical}
Claude~Elwood Shannon.
\newblock A {Mathematical Theory} of {Communication}.
\newblock \emph{{Bell System Technical Journal}}, 27(3):379--423, 1948.

\bibitem[Jaynes, 1957]{jaynesInformationTheoryStatistical1957}
E.~T. Jaynes.
\newblock {Information Theory} and {Statistical Mechanics}.
\newblock \emph{{Physical Review}}, 106(4):620--630, 1957.

\bibitem[Conrad, 2010]{conrad2010probability}
Keith Conrad.
\newblock {Probability Distributions} and {Maximum Likelihood}.
\newblock 2010.

\bibitem[Ma and Collins, 2018]{Ma2018NoiseCE}
Zhuang Ma and Michael Collins.
\newblock \href{http://aclweb.org/anthology/D18-1405}{{Noise Contrastive Estimation} and {Negative Sampling} for {Conditional Models}: {Consistency} and {Statistical Efficiency}}.
\newblock \emph{{Empirical Methods} in {Natural Language Processing}}, pp. 3698--3707, 2018.

\bibitem[Poole et~al., 2019]{poole2019variational}
Ben Poole, Sherjil Ozair, Aaron Van Den~Oord, Alex Alemi, and George Tucker.
\newblock {On Variational Bounds} of {Mutual Information}.
\newblock \emph{{International Conference} on {Machine Learning}}, pp. 5171--5180, 2019.

\bibitem[Higham, 2022]{higham2022what}
Nick Higham.
\newblock \href{https://nhigham.com/2022/01/31/what-is-the-second-difference-matrix/}{{What Is} the {Second Difference Matrix}?}
\newblock \url{https://nhigham.com/2022/01/31/what-is-the-second-difference-matrix/}, 2022.

\bibitem[Newman and Todd, 1958]{newman1958evaluation}
Morris Newman and John Todd.
\newblock \href{http://dx.doi.org/10.1137/0106030}{{The Evaluation} of {Matrix Inversion Programs}}.
\newblock \emph{{Journal} of the {Society} for {Industrial} and {Applied Mathematics}}, 6(4):466--476, 1958.

\bibitem[Fu et~al., 2020]{fu2020d4rl}
Justin Fu, Aviral Kumar, Ofir Nachum, George Tucker, and Sergey Levine.
\newblock \href{http://arxiv.org/abs/2004.07219}{{D4RL}: {Datasets} for {Deep Data-Driven Reinforcement Learning}}.
\newblock arXiv:2004.07219, 2020.

\bibitem[Van~der Maaten and Hinton, 2008]{van2008visualizing}
Laurens Van~der Maaten and Geoffrey Hinton.
\newblock {Visualizing Data Using T-SNE}.
\newblock \emph{{Journal} of {Machine Learning Research}}, 9(11), 2008.

\end{thebibliography}

\appendix

\allowdisplaybreaks
\textfloatsep 17pt plus 6.0pt minus 6.0pt
\floatsep 12pt plus 6.0pt minus 6.0pt

\onecolumn
\renewenvironment{figure*}{\figure}{\endfigure}

\section{Proofs}
\label{appendix:proofs}

This section contains the proofs omitted from the main text.

\def\ptrue{p^*}
\def\pdata{p_{\D}}
\def\D{\mathcal D}
\def\lalign{\mathcal L_{\text{align}}}
\def\luni{\mathcal L_{\text{uniform}}}
\def\l2#1{\left\|#1\right\|_2^2}
\def\goes{\raisebox{2px}{$\underset{\scriptscriptstyle\lalign\to0}\to$}}%

\subsection{Marginal Distribution over Representations is Gaussian}
\label{appendix:gaussian}

Recall \cref{asm:marginal}, which states that the marginal distribution over representations is Gaussian.
\restatetheorem{asm:marginal}

We will motivate this statement by connecting the optimal contrastive infoNCE objective to the maximum entropy marginal distribution over the representations.

The infoNCE objective (\refas{Eq.}{eq:info-nce}) can be decomposed into an alignment term and a uniformity term~\citep{wang2020understanding}, where the uniformity term can be simplified as follows:
\begin{align*}
     & \E_{x \sim p(x)} \left[ \log \E_{x^- \sim p(x)} \left[e^{-\frac{1}{2}\|A\psi(x^-) - \psi(x)\|_2^2} \right] \right]                      \\
     & = \frac{1}{N} \sum_{i=1}^N \log \bigg(\frac{1}{N-1} \sum_{j=1 \cdots N, j \neq i} e^{-\frac{1}{2}\|A\psi(x_i) - \psi(x_j)\|_2^2} \bigg) \\
     & = \frac{1}{N} \sum_{i=1}^N \log \bigg(\frac{1}{N-1} \sum_{j=1 \cdots N, j \neq i} \underbrace{\frac{1}{(2
    \pi)^{k/2}}e^{-\frac{1}{2}\|A\psi(x_i) - \psi(x_j)\|_2^2}}_{\gN(\mu = \psi(x_j); \Sigma = I)} \bigg) + \frac{k}{2} \log (2 \pi)            \\
     & = \frac{1}{N} \sum_{i=1}^N \log \hat{p}_\textsc{GMM}(\psi(x_i)) + \frac{k}{2} \log (2 \pi)                                              \\
     & = -\hat{\gH}[\psi(x)] + \frac{k}{2} \log (2 \pi).
\end{align*}

The derivation above extends that in~\citet{wang2020understanding} by considering a Gaussian distribution rather than a von Mises Fisher distribution.
We are implicitly making the assumption that the marginal distributions satisfy $p(x) = p(x^-)$.
This difference corresponds to our choice of using a negative squared L2 distance in the infoNCE loss rather than an inner product, a difference that will be important later in our analysis.
A second difference is that we do not use the resubstitution estimator (i.e., we exclude data point $x_i$ from our estimate of $\hat{p}_\text{GMM}$ when evaluating the likelihood of $x_i$), which we found hurt performance empirically.
The takeaway from this identity is that maximizing the uniformity term corresponds to maximizing (an estimate of) the entropy of the representations.

We next prove that the maximum entropy distribution with an expected L2 norm constraint is a Gaussian distribution. Variants of this result are well known~\citep{shannon1948mathematical, jaynesInformationTheoryStatistical1957,conrad2010probability}, but we include a full proof here for transparency.

\begin{lemma} \label{lemma:maxent}
    The maximum entropy distribution satisfying the expected L2 norm constraint in \cref{eq:norm} is a multivariate Gaussian distribution with mean $\mu = 0$ and covariance $\Sigma = c \cdot I$
\end{lemma}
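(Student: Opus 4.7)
The plan is to prove this via Gibbs' inequality (equivalently, non-negativity of KL divergence), which is the cleanest route and avoids calculus-of-variations bookkeeping. Let $q$ denote the candidate optimum, the density of $\mathcal{N}(0, c I)$ on $\mathbb{R}^k$, so that $-\log q(\psi) = \tfrac{k}{2}\log(2\pi c) + \tfrac{1}{2c}\|\psi\|_2^2$. The constraint in Eq.~\ref{eq:norm} can be restated as $\E_p[\|\psi\|_2^2] \le kc$. I would first verify by direct computation that $q$ saturates this constraint and has entropy $H[q] = \tfrac{k}{2}\log(2\pi c) + \tfrac{k}{2}$.

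Next, for any density $p$ on $\mathbb{R}^k$ satisfying the constraint, I would apply Gibbs' inequality $\mathrm{KL}(p \,\|\, q) \ge 0$, which rearranges to $H[p] \le -\E_p[\log q(\psi)]$. Plugging in the quadratic form of $\log q$ and the moment bound,
\begin{equation*}
    -\E_p[\log q(\psi)] = \tfrac{k}{2}\log(2\pi c) + \tfrac{1}{2c}\E_p[\|\psi\|_2^2] \le \tfrac{k}{2}\log(2\pi c) + \tfrac{k}{2} = H[q].
\end{equation*}
Chaining the two inequalities yields $H[p] \le H[q]$ for every feasible $p$, so the isotropic Gaussian achieves the maximum.

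For uniqueness, I would track the equality conditions in both steps: equality in Gibbs' inequality forces $p = q$ almost everywhere, which automatically realizes mean $0$ and covariance $c I$ and also saturates the moment constraint; so the maximizer is unique (up to measure-zero modifications). I would also briefly remark on why the constraint should be treated as tight at the optimum: if $\E_p[\|\psi\|_2^2] < kc$, one can strictly increase entropy by convolving $p$ with a small isotropic Gaussian noise, showing that the maximizer must lie on the boundary of the feasible set.

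I do not expect a serious obstacle here; the proof is essentially a textbook max-entropy argument. The only subtlety worth flagging is making sure the constraint is interpreted as a second-moment bound (not a variance bound), since otherwise the mean is not pinned down at $0$ — it is precisely the $\|\mu_p\|_2^2$ contribution to $\E_p[\|\psi\|_2^2]$ that penalizes nonzero means in the Gibbs calculation above, forcing $\mu = 0$ at the optimum.
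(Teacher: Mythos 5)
Your proof is correct, but it takes a genuinely different route from the paper's. The paper proves this lemma by writing down the Lagrangian $\gH_p[x] + \lambda_1(\E_p[\|x\|_2^2] - ck) + \lambda_2(\int p\,\dx - 1)$, setting the functional derivative to zero to obtain $p(x) \propto e^{\lambda_1 \|x\|_2^2}$, and then solving for the multipliers to recover $\gN(0, cI)$. Your Gibbs-inequality argument reaches the same conclusion by exhibiting the candidate $q = \gN(0, cI)$ and chaining $H[p] \le -\E_p[\log q] \le H[q]$ for every feasible $p$. The two approaches trade off differently: the Lagrangian route is constructive (it \emph{derives} the Gaussian form rather than verifying it) but, as written in the paper, only establishes a first-order stationarity condition and implicitly treats the inequality constraint as active; your route assumes the answer in advance but then delivers a genuinely global optimality certificate, pins down uniqueness via the equality case of $\KL(p\,\|\,q) \ge 0$, and correctly handles the constraint as an inequality (your convolution remark about why the maximizer saturates the bound is a nice touch, though strictly unnecessary since your chained inequality already covers all feasible $p$, saturating or not). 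Your closing observation — that it is the second-moment (not variance) form of the constraint that forces $\mu = 0$ — is also accurate and matches how \cref{eq:norm} is stated. No gaps.
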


\begin{proof}
    We start by defining the corresponding Lagrangian, with the second constraint saying that $p(x)$ must be a valid probability distribution.
    \begin{align*}
        \gL(p) = \gH_p[x] + \lambda_1 \left( \E_{p(x)}\left[\|x\|_2^2\right] - c \cdot k \right) + \lambda_2 \left( \int p(x) \dx - 1 \right)
    \end{align*}

    We next take the derivative w.r.t. $p(x)$:
    \begin{align*}
        \frac{\partial \gL}{\partial p(x)} = -p(x) / p(x) - \log p(x) + \lambda_1 \|x\|_2^2 + \lambda_2
    \end{align*}

    Setting this derivative equal to 0 and solving for $p(x)$, we get
    \begin{equation*}
        p(x) = e^{-1 + \lambda_2 + \lambda_1 \|x\|_2^2}.
    \end{equation*}

    We next solve for $\lambda_1$ and $\lambda_2$ to satisfy the constraints in the Lagrangian. Note that $x \sim \gN(\mu=0, \Sigma = c \cdot I)$ has an expected norm $\E[\|x\|_2^2] = c \cdot k$, so we must have $\lambda_1 = -\frac{1}{2c}$. We determine $\lambda_1$ as the normalizing constant for a Gaussian, finally giving us:
    \begin{equation*}
        p(x) = \frac{1}{(2 c\pi)^{k/2}} e^{\frac{-1}{2c} \|x\|_2^2}
    \end{equation*}
    corresponding to an isotropic Gaussian distribution with mean $\mu = 0$ and covariance $\Sigma = c \cdot I$.
\end{proof}

\subsection{Proof of \protect\cref{lemma:prediction}}
\label{appendix:prediction}
Below we present the proof of \cref{lemma:prediction}.

\restatetheorem{lemma:prediction}

\begin{proof}
    Our proof technique will be similar to that of the law of the unconscious statistician:
    \begingroup
    \begin{align*}
        p(&\psi_{t+} \mid \psi_0) \stackrel{(a)}{=} \frac{p(\psi_{t+}, \psi_0)}{\cancel{p(\psi_0)}}\propto \iint p(\psi_{t+}, x_{t+}, \psi_0, x_0)
            \dx_{t+} \dx_0 \\
        &\stackrel{(b)}{=} \iint p(\psi_{t+} \mid x_{t+}) p(\psi_0 \mid x_0) p(x_{t+} \mid x_0) p(x_0)
            \dx_{t+} \dx_0 \\
        &\stackrel{(c)}{\propto} \iint \1(\psi(x_{t+}) = \psi_{t+}) \1(\psi(x_0) = \psi_0) p(x_{t+})
            e^{-\frac{1}{2}\|A\psi(x_0)
            - \psi(x_{t+})\|_2^2} p(x_0) \dx_{t+} \dx_0 \\
        &\stackrel{(d)}{=} e^{-\frac{1}{2}\|A\psi_0 - \psi_{t+}\|_2^2} \iint \1(\psi(x_{t+}) =
            \psi_{t+})\1(\psi(x_0) = \psi_0)
            p(x_{t+}) p(x_0) \dx_{t+} \dx_0 \\
        &\stackrel{(e)}{=} e^{-\frac{1}{2}\|A\psi_0 - \psi_{t+}\|_2^2} \underbrace{\left(\int p(x_{t+})
            \1(\psi(x_{t+})) \dx_{t+}
            \right)\hspace*{-2pt}}_{p(\psi_{t+})}\hspace*{3pt}\underbrace{\hspace*{-2pt}\left(\int p(x_0) \1 (\psi(x_0) \dx_0 \right)}_{p(\psi_0)} \\
        &\stackrel{(f)}{\propto} e^{-\frac{1}{2}\|A\psi_0 - \psi_{t+}\|_2^2} e^{-\frac{1}{2c}
            \|\psi_{t+}\|_2^2} e^{-\frac{1}{2c} \|\psi_0\|_2^2}
            \\
        &\stackrel{(g)}{\propto} e^{-\frac{1 + \frac{1}{c}}{2}\bigl\|{\frac{1}{1 + \frac{1}{c}}A \psi_0 -
            \psi_{t+}}\bigr\|_2^2} \\
        &\propto \gN \left(\psi_{t+}; \mu = \tfrac{c}{c + 1} A \psi_0, \Sigma = \tfrac{c}{c+1}I \right).
    \end{align*}
    \endgroup%

    In \emph{(a)} we applied Bayes' Rule and removed the denominator, which is a constant w.r.t. $\psi_{t+}$.
    In \emph{(b)} we factored the joint distribution, noting that $\psi_{t+}$ and $\psi_0$ are deterministic functions of $x_{t+}$ and $x_0$ respectively, so they are conditionally independent from the other random variables.
    In \emph{(c)} we used \cref{asm:symmetrized} after solving for $p(x_{t+} \mid x_0) = p(x_{t+}) e^{-\frac{1}{2}\|A\psi(x_0) - \psi(x)\|_2^2}$.
    In \emph{(d)} we noted that when the integrand is nonzero, it takes on a constant value of $e^{-\frac{1}{2} \|A \psi_0 - \psi_{t+}\|_2^2}$, so we can move that constant outside the integral.
    In \emph{(e)} we used the definition of the marginal representation distribution (\refas{Eq.}{eq:marginal}).
    In \emph{(f)} we used \cref{asm:marginal} to write the marginal distributions $p(\psi_{t+})$ and $p(\psi_0)$ as Gaussian distributions. We removed the normalizing constants, which are independent of $\psi_{t+}$.
    In \emph{(g)} we completed the square and then recognized the expression as the density of a multivariate Gaussian distribution.
\end{proof} %

\subsection{Proof of \cref{thm:planning}: Waypoint Distribution}
\label{appendix:planning}

\restatetheorem{thm:planning}

\begin{proof}
    \begin{align*}
        p(\psi_w \mid \psi_0, \psi_{t+})
         & \stackrel{(a)}{=}  \frac{p(\psi_{t+} \mid \psi_w) p(\psi_w \mid \psi_0)}{\cancel{p(\psi_{t+} \mid \psi_0)}}                                                            \\
         & \stackrel{(b)}{\propto} e^{-\frac{1 + \frac{1}{c}}{2}\|\frac{c}{c+1} A \psi_w - \psi_{t+}\|_2^2} e^{-\frac{1 + \frac{1}{c}}{2}\|\frac{c}{c+1} A \psi_0 - \psi_w\|_2^2} \\
         & \stackrel{(c)}{\propto}  e^{-\frac{1}{2} (\psi_w - \mu)^T \Sigma^{-1} (\psi_w - \mu)}  = \gN(\psi_w; \mu, \Sigma)                                                      \\
    \end{align*}
    where $\Sigma^{-1} = \frac{c}{c+1}A^T A + \frac{c+1}{c}I$ and $\mu = \Sigma (A^T \psi_{t+} + A \psi_0)$.
\end{proof}
In line \emph{(a)} we used the definition of the conditional distribution and then simplified the numerator using the Markov property.
Line \emph{(b)} uses the \cref{lemma:prediction}.
Line \emph{(c)} completes the square, the details of which are below:
\begin{align*}
     & \hspace{-2em}\frac{1}{2} \cdot \frac{c+1}{c}\biggl(\Bigl\|\frac{c}{c+1}A \psi_w -
    \psi_{t+}\Bigr\|_2^2 + \Bigl\| \frac{c}{c+1}A \psi_0 - \psi_w\Bigr\|_2^2 \biggr)                                                                                      \\
     & = \frac{1}{2} \cdot \frac{c+1}{c}\biggl(\psi_w^T \Bigl(\frac{c}{c+1}A\Bigr)^T
    \Bigl(\frac{c}{c+1}A\Bigr) \psi_w - 2 \psi_{t+}^T \Bigl(\frac{c}{c+1}A\Bigr) \psi_w +
    \cancel{\psi_{t+}^T \psi_{t+}} .                                                                                                                                      \\
     & \qquad  + \cancel{\psi_0^T \Bigl(\frac{c}{c+1}A\Bigr)^T \Bigl(\frac{c}{c+1} A\Bigr) \psi_0} - 2\psi_0^T
    \Bigl(\frac{c}{c+1} A\Bigr)^T \psi_w + \psi_w^T \psi_w \biggr)                                                                                                        \\
     & \stackrel{\text{const.}}{=} \frac{1}{2} \cdot \frac{c+1}{c} \Biggl( \psi_w^T \biggl(\Bigl(\frac{c}{c+1}\Bigr)^2 A^TA +
    I\biggr)\psi_w - 2\cdot \frac{c}{c+1}\bigl(A^T \psi_{t+} + A\psi_0\bigr)^T \psi_w \Biggr)                                                                                  \\
     & = \frac{1}{2} \psi_w^T \biggl(\hspace*{3pt}\underbrace{\hspace*{-2pt}\frac{c}{c+1}A^TA + \frac{c+1}{c}I \hspace*{-2pt}}_{\Sigma^{-1}}\hspace*{3pt}\biggr) \psi_w -
    \bigl(A^T \psi_{t+} + A\psi_0\bigr)^T \psi_w                                                                                                                          \\
     & \stackrel{\text{const.}}{=} (\psi_w - \mu)^T \Sigma^{-1} (\psi_w - \mu),
\end{align*}
where $\Sigma^{-1} = \frac{c}{c+1}A^TA + \frac{c+1}{c}I$ and $\mu = \Sigma(A^T \psi_{t+} + A \psi_0)$.
Above, we have used $\stackrel{\text{const.}}{=}$ to denote equality up to an additive constant that is independent of $\psi_w$.

\subsection{Proof of \cref{thm:multistep}: Planning over Many Intermediate States}

\restatetheorem{thm:multistep}

\begin{proof}
    We start by recalling that the waypoints form a Markov chain, so we can express their joint density as a product of conditional densities:
    \begin{equation*}
        p(\psi_{1:n}) = p(\psi_0) p(\psi_1 \mid \psi_0) p(\psi_2 \mid \psi_1) \cdots p(\psi_n \mid \psi_{n-1}).
    \end{equation*}
    The aim of this lemma is to express the joint distribution over multiple waypoints, given an initial and final state representation:
    \begin{align*}
        p(\psi_{1:n} \mid \psi_0, \psi_{t+})
        &\stackrel{(a)}{=} \frac{p(\psi_{1:n} \mid \psi_0) p(\psi_{t+} \mid \psi_n)}{\cancel{p(\psi_{t+}
            \mid \psi_0)}} \\
        &\stackrel{(b)}{\propto} p(\psi_1 \mid \psi_0) p(\psi_2 \mid \psi_1) \cdots p(\psi_{t+} \mid
            \psi_n) \\
        &\stackrel{(c)}{\propto} \exp\Bigl(-\tfrac{1}{2}\tfrac{c+1}{c} \|\tfrac{c}{c+1} A \psi_0 -
            \psi_1\|_2^2 -\tfrac{1}{2}\tfrac{c+1}{c}
            \|\tfrac{c}{c+1} A \psi_1 \nonumber \\
        &\qquad\qquad - \psi_2\|_2^2 - \cdots -\tfrac{1}{2}\tfrac{c+1}{c} \|\tfrac{c}{c+1} A \psi_n -
            \psi_{t+}\|_2^2 \Bigr) \\
        &\stackrel{(d)}{=} \exp\biggl(
            - \cancel{\tfrac{1}{2}\tfrac{c}{c+1} \psi_0^T A^T A \psi_0} + \psi_0^T A^T \psi_1 - \tfrac{1}{2}
            \tfrac{c+1}{c} \psi_1^T \psi_1 \\
        &\qquad\qquad -\tfrac{1}{2}\tfrac{c}{c+1} \psi_1^T A^T A \psi_1 + \psi_1^T A^T \psi_2 -
            \tfrac{1}{2} \tfrac{c+1}{c} \psi_2^T \psi_2 \\
        &\qquad\qquad -\tfrac{1}{2}\tfrac{c}{c+1} \psi_2^T A^T A \psi_2 + \psi_2^T A^T \psi_3 -
            \tfrac{1}{2} \tfrac{c+1}{c} \psi_3^T \psi_3 \\
        &\kern 4cm \vdots \\
        &\qquad\qquad -\tfrac{1}{2}\tfrac{c}{c+1} \psi_n^T A^T A \psi_n + \psi_n^T A^T \psi_{t+} -
            \cancel{\tfrac{1}{2} \tfrac{c+1}{c} \psi_{t+}^T
            \psi_{t+}} \biggr) \\
        &= \exp\bigl( -\tfrac{1}{2}\psi_{1:n}^T \Sigma^{-1} \psi_{1:n} + \eta^T \psi_{1:n} \bigr),
    \end{align*}
    where
    \begin{equation*}
        \Sigma^{-1} = \begin{pmatrix}
                            \frac{c}{c+1}A^TA + \frac{c+1}{c}I & -A^T                               &                                    &                      \\
                            -A                                 & \frac{c}{c+1}A^TA + \frac{c+1}{c}I & -A^T                               &                      \\
                                                               & -A                                 & \frac{c}{c+1}A^TA + \frac{c+1}{c}I &                      \\[-2px]
                                                               &                                    &                                    & \ddots \hspace*{3px} \\[5px]
                        \end{pmatrix} \text{ and }
        \eta = \begin{pmatrix}
                   A \psi_0 \\
                   0        \\[-1ex]
                   \vdots   \\[-2ex]
                   \\
                   0        \\
                   A^T \psi_{t+}
               \end{pmatrix}.
    \end{equation*}

    In \emph{(a)} we applied Bayes' rule and removed the denominator because it is a constant with respect to $\psi_{1:n}$.
    In \emph{(b)} we applied the Markov assumption.
    In \emph{(c)} we used \cref{lemma:prediction} to express the conditional probabilities as Gaussians, ignoring the proportionality constants (which are independent of $\psi$.
    In \emph{(d)} we simplified the exponents, removing terms that do not depend on $\psi_{1:n}$.
\end{proof}

\subsection{Formalizing \cref{asm:symmetrized}}
\label{appendix:symmetrized}

\Cref{asm:symmetrized} relates the learned contrastive critic to a log-likelihood ratio between the positive and negative data distribution.

\restatetheorem{asm:symmetrized}

We can justify this assumption by analyzing the general solution to the symmetrized version of the \citet{oord2018representation} infoNCE objective, which we do in \cref{lemma:full_infonce}.
Applying this lemma to our representation learning objective (\ref{eq:info-nce}) for sufficiently large batch size $B$ then yields \cref{eq:bayes-opt}, with the function approximator $\|\phi(x)-\psi(x^{+})\|^2 \approx f(x,x^{+})$.

\begin{lemma} \label{lemma:full_infonce}
    The solution to the optimization problem
    \begin{equation}
        \max_{f(x,x^{+})}\lim_{B \to \infty} \E_{\{(x_{i}, x_{i}^+)\}_{i=1}^{B} \sim p(x,x^+)} \biggl[\tfrac1B\sum_{i=1}^{B}\log \tfrac{e^{
        f(x_{i},x_{i}^+)}}{\sum_{j\neq i} e^{ f(x_{i},x_j^+)}} +\log \tfrac{e^{ f(x_{i},x_{i}^+)}}{\sum_{j\neq i} e^{ f(x_{j},x_i^+)}} \biggr]
        \label{eq:full_infonce}
    \end{equation}
    satisfies
    \begin{equation}
        f(x, x^+) = \log \biggl(\frac{p( x^+ \mid x)}{p(x^{+}) C}\biggr) \label{eq:full_infonce_soln}
    \end{equation}
    for some $C$.
\end{lemma}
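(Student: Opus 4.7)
The plan is to analyze each of the two symmetric terms in \cref{eq:full_infonce} separately via the standard infoNCE optimality argument, and then combine them using Bayes' rule to pin down the ambiguity to a single constant. First I would take the limit $B \to \infty$, which allows each of the two summands to be rewritten as a population expectation in which the denominator concentrates (by the law of large numbers) to an expectation over a negative sample drawn from $p(x)$ or $p(x^+)$. Concretely, the first term becomes
\begin{equation*}
    \E_{(x,x^+) \sim p(x,x^+)}\Bigl[ f(x,x^+) - \log \E_{\tilde x^+ \sim p(x^+)}\bigl[e^{f(x,\tilde x^+)}\bigr] \Bigr],
\end{equation*}
and the second term is the mirror image obtained by swapping the role of $x$ and $x^+$.

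Next I would apply the standard density-ratio-estimation calculation (take a pointwise functional derivative in $f$, or equivalently recognize this as an NCE/softmax-matching objective) to each term separately. The first term is maximized, for each fixed $x$, when $e^{f(x,x^+)}$ is proportional to $p(x^+\mid x)/p(x^+)$, which determines $f(x,x^+) = \log\tfrac{p(x^+\mid x)}{p(x^+)} + c_1(x)$ for some function $c_1$ of $x$ only. By symmetry, the second term forces $f(x,x^+) = \log\tfrac{p(x\mid x^+)}{p(x)} + c_2(x^+)$ for some function $c_2$ of $x^+$ only. Any maximizer of the full symmetrized objective must satisfy both characterizations simultaneously.

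To finish, I would invoke Bayes' rule: $p(x^+\mid x)/p(x^+) = p(x,x^+)/(p(x)p(x^+)) = p(x\mid x^+)/p(x)$, so the two log-ratios coincide. Subtracting the two representations of $f$ then gives $c_1(x) = c_2(x^+)$ for all $(x,x^+)$ in the support, which forces both to equal a common constant $c$. Setting $C = e^{-c}$ yields the claimed form \cref{eq:full_infonce_soln}.

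The main obstacle is technical rather than conceptual: justifying the pointwise ``$f^\star = \log \tfrac{p^+}{p^-}$'' optimality argument in the $B\to\infty$ limit, since interchanging limit and maximum requires some uniformity (e.g.\ assuming $f$ ranges over a rich enough function class and that the log-partition term concentrates). This is standard in the NCE literature (cf.\ \citep{Ma2018NoiseCE,poole2019variational}), and once granted, the symmetrization step that removes the $x$-dependence in $C$ is immediate from Bayes' rule as above.
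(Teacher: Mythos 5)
Your proposal follows essentially the same route as the paper: split the symmetrized objective into its two asymmetric infoNCE terms, invoke the Ma--Collins characterization that each term's optimum is the corresponding log-ratio up to an arbitrary function of one argument ($c_1(x)$ resp.\ $c_2(x^+)$), and use Bayes' rule to reconcile the two forms into a single constant $C$. The only (minor) difference is that you additionally argue the converse direction --- that every joint maximizer must collapse $c_1$ and $c_2$ to a common constant --- whereas the paper simply exhibits the common maximizer $c_1(x)=Cp(x)$, $c_2(x^+)=Cp(x^+)$; this is a small refinement, not a different approach.
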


\begin{proof}[Proof of \cref{lemma:full_infonce}]

    We first break down the LHS and RHS of \cref{eq:info-nce}:
\begin{align*}
        \max_{f} \lim_{B \to \infty} &\E_{\{(x_{i}, x_{i}^+)\}_{i=1}^{B} \sim p(x,x^+)}
            \Biggl[\frac1B\sum_{i=1}^{B}\underbrace{\log \frac{e^{
            f(x_{i},x_{i}^+)}}{\sum_{j\neq i} e^{ f(x_{i},x_j^+)}}}_{\mathclap{\mathcal{J}_1}}
            +\underbrace{\log \frac{e^{
            f(x_{i},x_{i}^+)}}{\sum_{j\neq i} e^{ f(x_{j},x_i^+)}}}_{\mathclap{\mathcal{J}_2}} \Biggr] \\
        \mathcal{J}_{1}(f)
        &= \lim_{B \to \infty}\E_{\{(x_{i}, x_{i}^+)\}_{i=1}^{B} \sim p(x,x^+)}
            \Biggl[\frac1B\sum_{i=1}^{B}\log \frac{e^{
            f(x_{i},x_{i}^+)}}{\sum_{j\neq i} e^{ f(x_{i},x_j^+)}} \Biggr] \\
        \mathcal{J}_{2}(f)
        &= \lim_{B \to \infty}\E_{\{(x_{i}, x_{i}^+)\}_{i=1}^{B} \sim p(x,x^+)}
            \Biggl[\frac1B\sum_{i=1}^{B}\log \frac{e^{
            f(x_{i},x_{i}^+)}}{\sum_{j\neq i} e^{ f(x_{j},x_i^+)}} \Biggr]
    \end{align*}

    We now use the following result from \citet{Ma2018NoiseCE}:
    \begin{lemma}
        \label{lemma:infonce_soln}
        The optimal solutions $f_1$ and $f_2$ for $\mathcal{J}_1$ and $\mathcal{J}_2$ satisfy
        \begin{align}
            f_1(x, x^+) & = \log p(x \mid x^+) - \log c_1(x) \label{eq:opt_f1}      \\
            f_2(x, x^+) & = \log p(x^+ \mid x ) - \log c_2(x^{+}) \label{eq:opt_f2}
        \end{align}
        for arbitrary $c_1(x), c_2(x^{+})$.
    \end{lemma}

    For any $C$, when $c_1(x) = C p(x)$ and $c_2(x^{+}) = Cp(x^{+})$,
    \begin{align}
        f_1(x, x^+) & = \log \biggl(\frac{p( x \mid x^+)}{p(x) C}\biggr)
        = \log \biggl(\frac{p( x^{+} \mid x)}{p(x^{+}) C}\biggr)
        = f_2(x, x^+). \label{eq:joint_optimal}
    \end{align}

    It follows that \cref{eq:joint_optimal} maximizes both $\mathcal{J}_1$ and $\mathcal{J}_2$, and is precisely the optimal solution \cref{eq:full_infonce_soln} for \cref{eq:full_infonce}.
\end{proof}

\paragraph{What does $C$ represent?} From \cref{eq:full_infonce_soln}, we can connect $C$ to the mutual information $I(x,x^{+})$:
\begin{equation}
    C = \frac{\mathbb{E}_{(x,x^{+}) \sim p(x,x^{+})} \bigl[ f(x,x^{+}) \bigr]}{{I(x,x^{+})}}.
    \label{eq:mi_connection}
\end{equation}

\begin{proof}[Proof of \cref{lemma:infonce_soln}]

    We can first consider $\mathcal{J}_1$ without loss of generality. Denoting
    \[
        g(x,x^{+})=e^{f(x,x^{+})},
    \]
    we take the functional derivative:
    \begin{align*}
        \delta \mathcal{J}_1(\log g)
         & =\lim_{B \to \infty}\delta \E_{\{(x_{i}, x_{i}^+)\}_{i=1}^{B} \sim p(x,x^+)} \biggl[\tfrac1B\sum_{i=1}^{B}\log\frac{
        g(x_{i},x_{i}^+)}{\sum_{j\neq i} g(x_{i},x_j^+)} \biggr] \nonumber                                                                           \\
         & =\lim_{B \to \infty}\E_{\{(x_{i}, x_{i}^+)\}_{i=1}^{B} \sim p(x,x^+)}\biggl[\tfrac1B\sum_{i=1}^{B}\tfrac{ (\sum_{j\neq i} g(x_{i},x_j^+))
        \delta g(x_{i},x_{i}^+) - g(x_{i},x_{i}^+) \delta (\sum_{j\neq i} g(x_{i},x_j^+))}{{ g(x_{i},x_{i}^+)}(\sum_{j\neq i}
        g(x_{i},x_j^+))}\biggr] \nonumber                                                                                                            \\
         & =\lim_{B \to \infty}\E_{\{(x_{i}, x_{i}^+)\}_{i=1}^{B}\sim p(x,x^+)}\biggl[\tfrac1B\sum_{i=1}^{B}\tfrac{ \delta g(x_{i},x_{i}^+) }{{
        g(x_{i},x_{i}^+)}} -\tfrac{ \delta (\sum_{j\neq i} g(x_{i},x_j^+))}{\sum_{j\neq i} g(x_{i},x_j^+)}\biggr] \nonumber                          \\
         & =\lim_{B \to \infty}\E_{\{(x_{i}, x_{i}^+)\}_{i=1}^{B}\sim p(x_{i},x^+)}\biggl[\tfrac{1}B\sum_{i=1}^{B} \int \Bigl(\bigl(\tfrac{ \delta
        g(x_{i},x^+) }{{ g(x_{i},x^+)}}\bigr) p(x^{+} \mid x_{i}) \nonumber                                                                          \\*
         & \hspace*{4.7cm} - \sum_{k\neq i} \bigl(\tfrac{ \delta g(x_{i},x^{+})}{ g(x_{i},x^{+}) - g(x_{i},x^{+}_{k}) + \sum_{j\neq i}
        g(x_{i},x_j^+)}\bigr) p(x^{+}) \Bigr)\dd x^{+}\biggr]                                                                                        \\
         & =\lim_{B \to \infty}\E_{\{(x_{i}, x_{i}^+)\}_{i=1}^{B}\sim p(x,x^+)}\biggl[\tfrac{1}B\sum_{i=1}^{B} \nonumber \int \delta g(x_{i},x^+)
        \Bigl(\tfrac{p(x^{+} \mid x_{i})}{{ g(x_{i},x^+)}}                                                                                           \\*
         & \hspace*{4.7cm} - \underbrace{\E_{\{(x_{i}, x_{i}^+)\}_{i=1}^{B}}\Bigl[\tfrac{1}{ \sum_{j\neq i}
        g(x_{i},x_j^+)}\Bigr]}_{\mathclap{\text{as $B\to \infty$}}} p(x^{+})\Bigr) \dd x^{+}\biggr]                                                  \\
         & =\lim_{B \to \infty}\E_{\{(x_{i}, x_{i}^+)\}_{i=1}^{B}\sim p(x,x^+)}\biggl[\tfrac{1}B\sum_{i=1}^{B} \nonumber \int \delta g(x_{i},x^+)
        \Bigl(\tfrac{p(x^{+} \mid x_{i})}{{ g(x_{i},x^+)}}                                                                                           \\*
         & \hspace*{4.7cm} - \underbrace{\E_{\{(x_{i}, x_{i}^+)\}_{i=1}^{B}}\Bigl[\tfrac{1}{ \sum_{j\neq i}
        g(x_{i},x_j^+)}\Bigr]}_{\mathclap{\text{$\triangleq\!k(x_i)$ indep. of $x^+$}}} p(x^{+})\Bigr) \dd x^{+}\biggr]                              \\
         & =\lim_{B \to \infty}\E_{\{(x_{i}, x_{i}^+)\}_{i=1}^{B}\sim p(x,x^+)}\biggl[\tfrac{1}B\sum_{i=1}^{B} \int \delta g(x_{i},x^+)
        \Bigl(\tfrac{p(x^{+} \mid x_{i})}{{ g(x_{i},x^+)}} - k(x_{i}) p(x^{+})\Bigr) \dd x^{+}\biggr]                                                \\
         & = \int \delta g(x,x^+) \bigl(\tfrac{p(x^{+} \mid x)}{{ g(x,x^+)}} - k(x) p(x^{+})\bigr) \dd x^{+} .
    \end{align*}

    This is zero when
    \[
        g(x,x^{+}) = \frac{p(x \mid x^{+})}{k(x)p(x)}  ,
    \]
    i.e.,
    \[
        f(x,x^{+}) = \log p(x \mid x^{+}) - \log \underbrace{c_1(x)}_{\mathclap{k(x)p(x)}}
    \]
    as in \cref{eq:opt_f1}, and \cref{eq:opt_f2} follows similarly, exchanging $x$ and $x^{+}$.
\end{proof}

\section{Additional Experiments}

\begin{figure*}
    \centering%
    \includegraphics[width=0.2\linewidth]{figures/point_plan_08.png}%
    \includegraphics[width=0.2\linewidth]{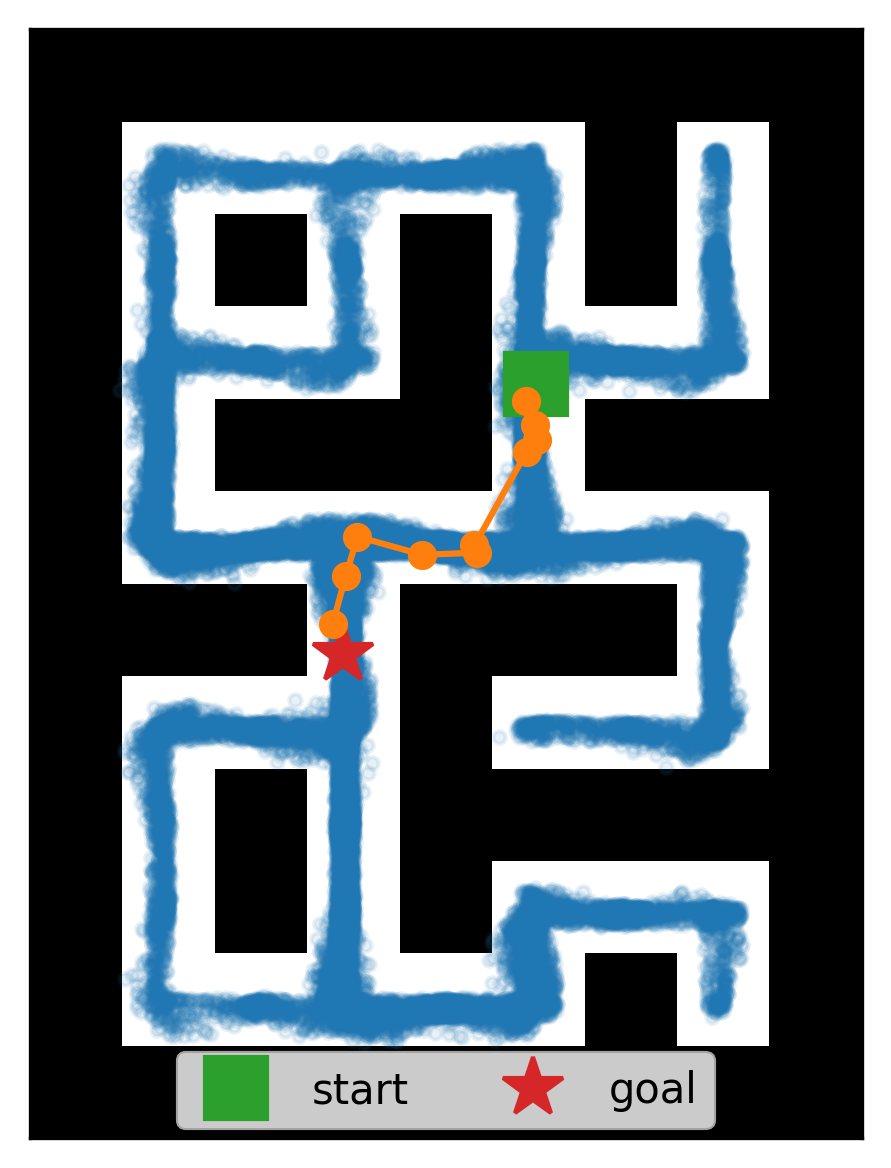}%
    \includegraphics[width=0.2\linewidth]{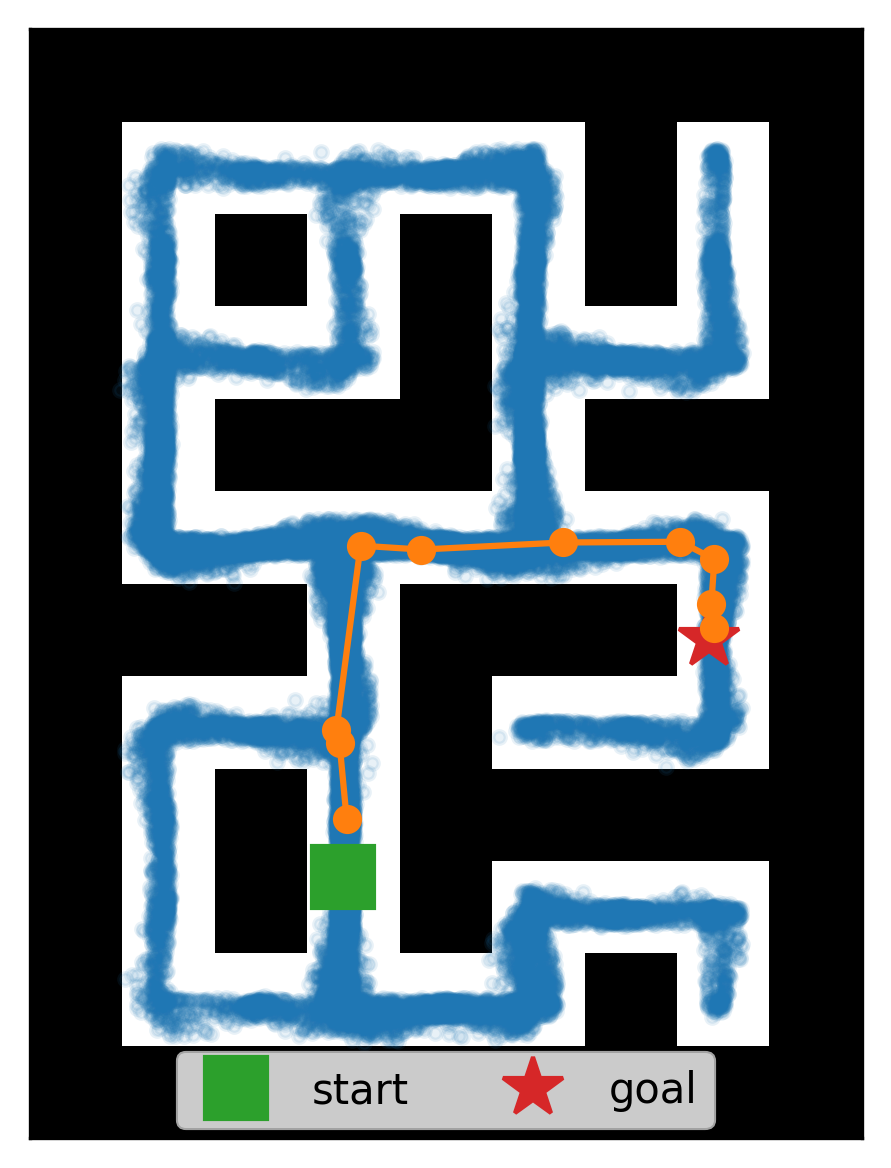}%
    \includegraphics[width=0.2\linewidth]{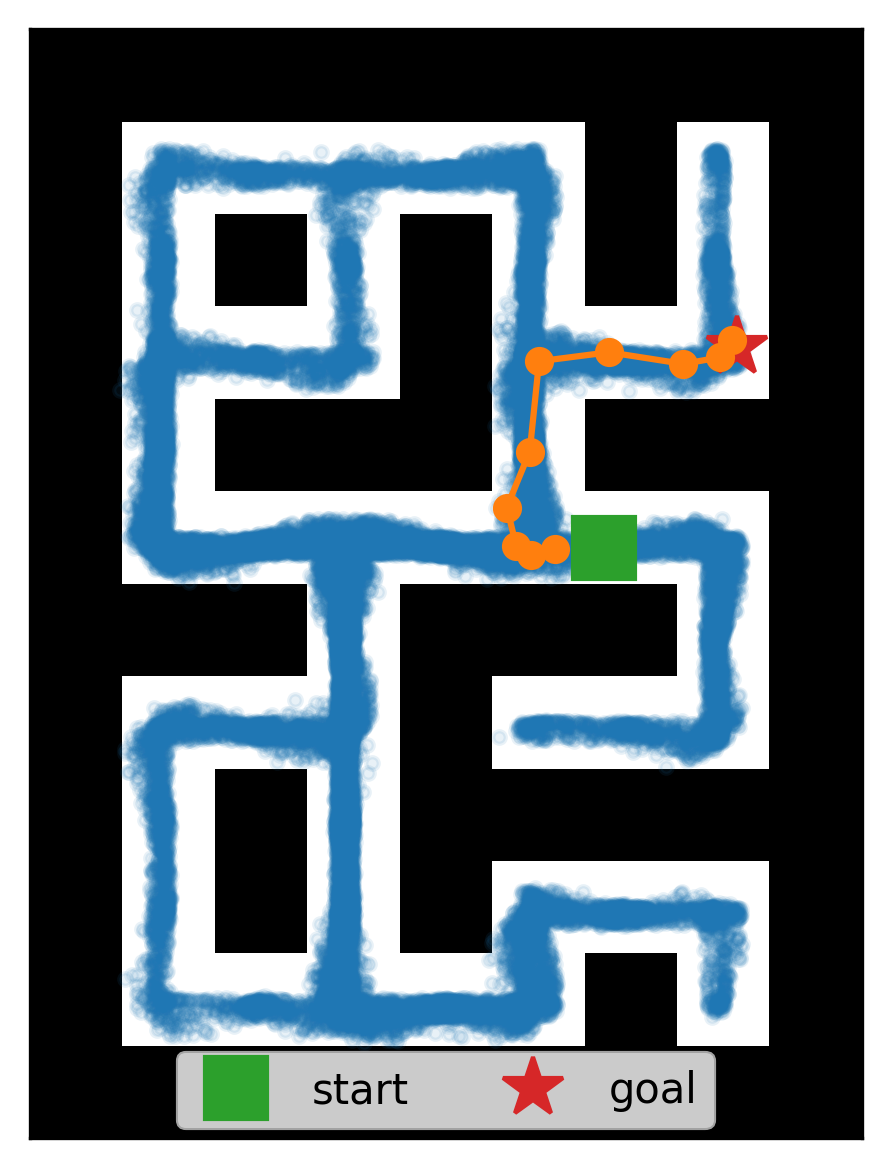}%
    \includegraphics[width=0.2\linewidth]{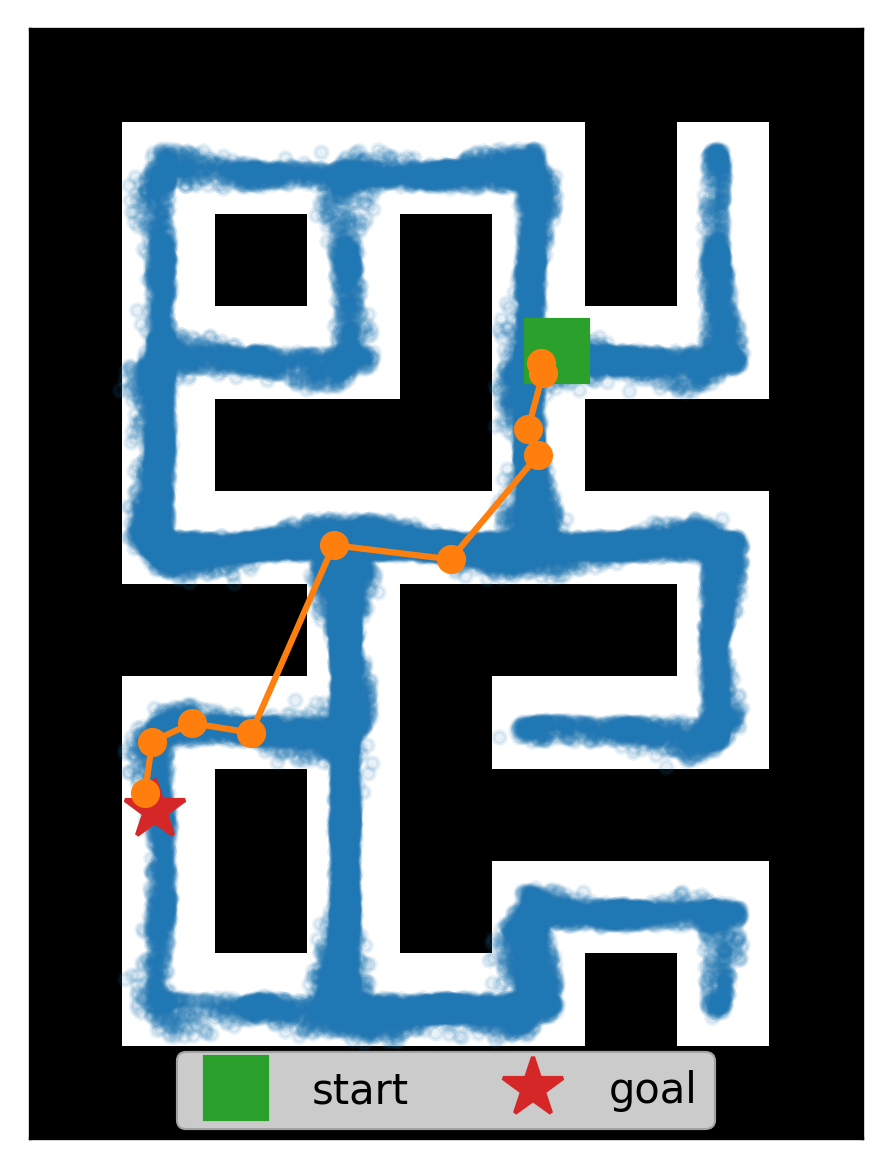}%
    \caption{Our approach enables a goal-conditioned policy to reach farther targets (red) from the start (green) by planning over intermediate waypoints (orange).}%
    \label{fig:maze}
\end{figure*}

\cref{fig:maze} visualizes the inferred waypoints from the task in \cref{fig:maze-results}.
\cref{fig:hammer} visualizes the representations learned on a 46-dimensional robotic hammering task (see \cref{sec:high-dim}).

\begin{figure*}
    \centering
    \begin{subfigure}[c]{0.2\textwidth}
        \centering
        \includegraphics[width=\linewidth]{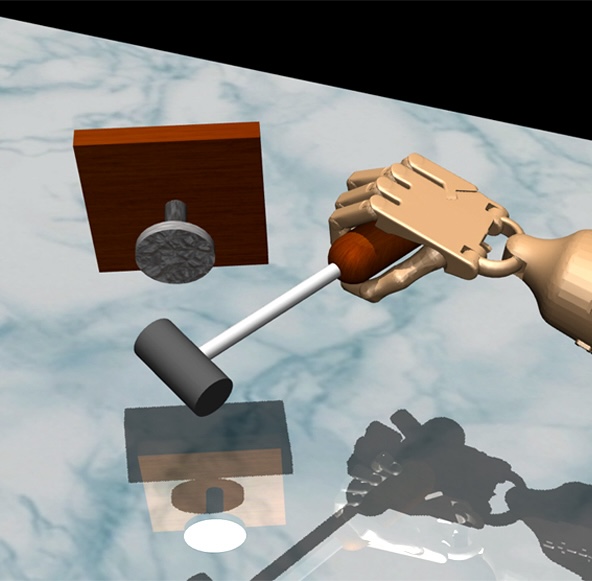}
    \end{subfigure}%
    \kern 0.05\textwidth
    \begin{subfigure}[c]{0.45\textwidth}
        \centering
        \newdimen\w
        \w=.21\linewidth
        \parbox[c]{\w}{\centering\includegraphics[width=\w]{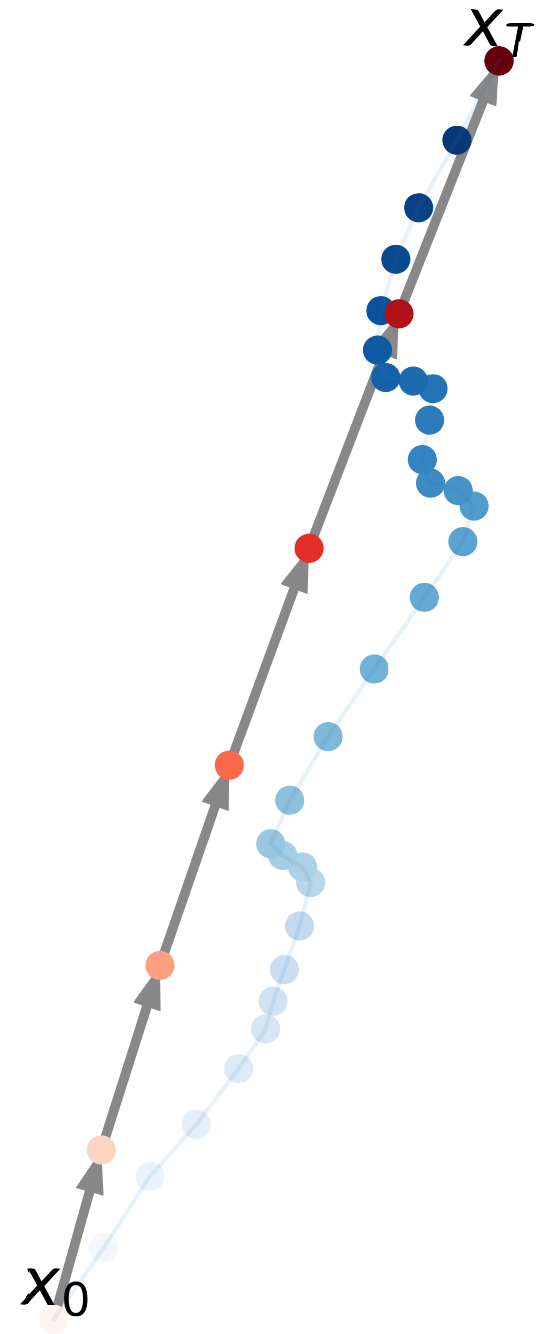}}%
        \w=.47\linewidth%
        \kern 1em
        \parbox[c]{\w}{\centering\includegraphics[width=\w]{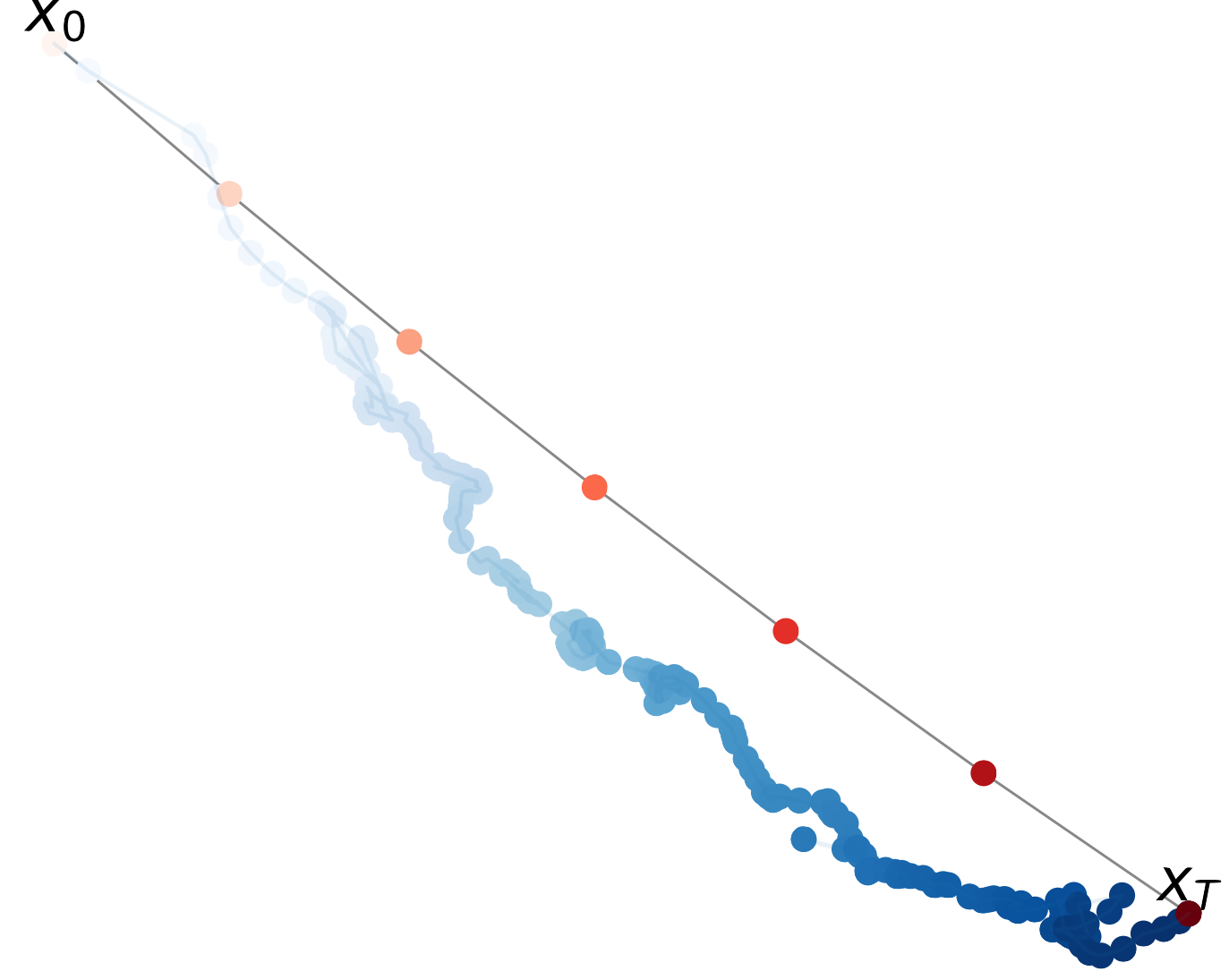}}%
        \w=.34\linewidth%
        \kern -2em%
        \parbox[c]{\w}{\centering\includegraphics[width=\w]{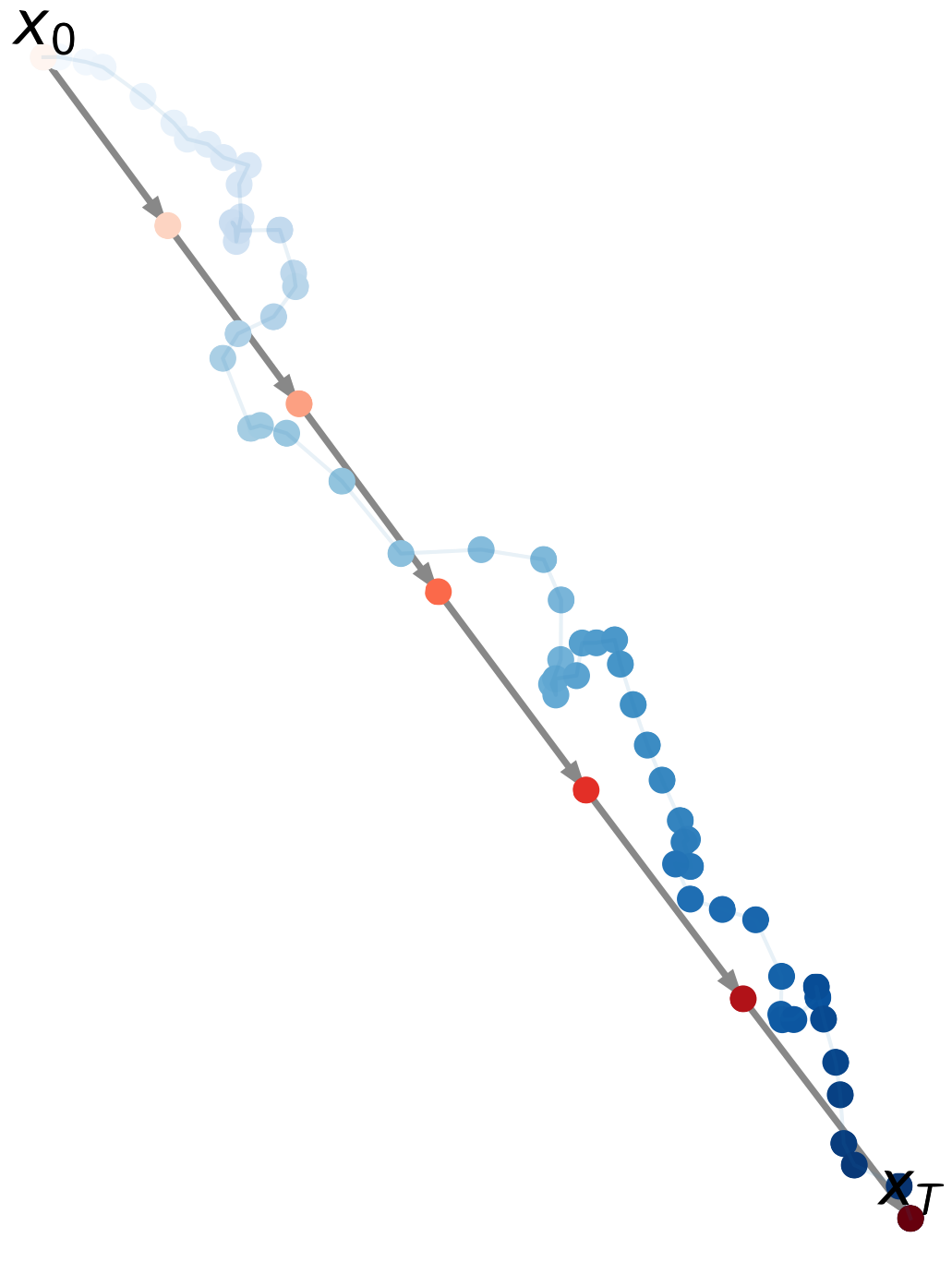}}%
        \caption{Contrastive representations}
    \end{subfigure}%
    \kern 0.05\textwidth
    \begin{subfigure}[c]{0.25\textwidth}
        \centering
        \includegraphics[height=3cm]{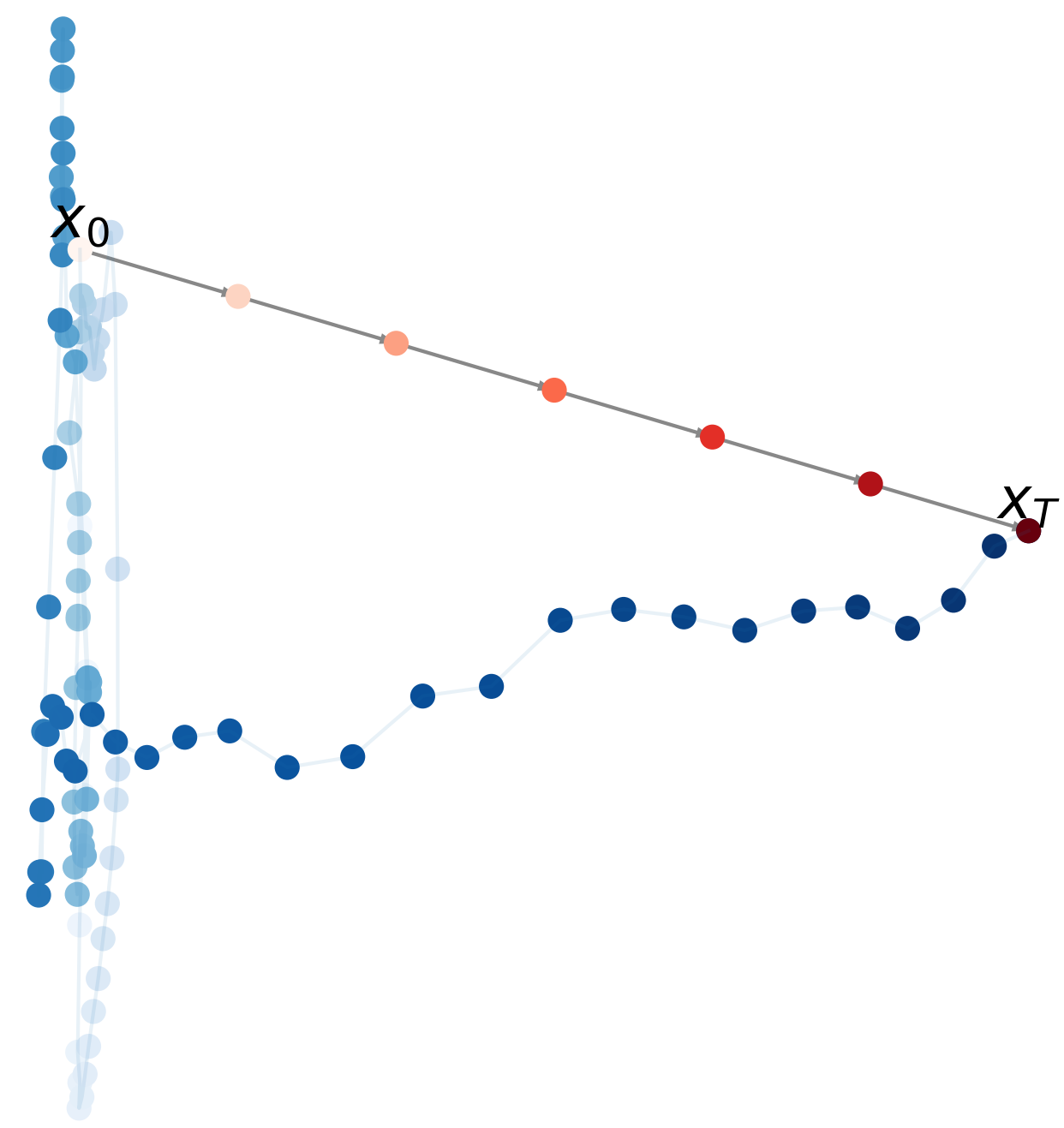}
        \caption{PCA representations}
    \end{subfigure}
    \caption{Planning for 46-dimensional robotic hammering.
        \figleft \, A dataset of trajectories demonstrating a hammer knocking a nail into a board~\citep{fu2020d4rl}. \figcenter \, We visualize the learned representations as blue circles, with the transparency indicating the index of that observation along the trajectory. We also visualize the inferred plan (\cref{sec:planning}) as red circles connected by arrows.
        \figright \, Representations learned by PCA on the same trajectory as \emph{(a, left)}.
    }
    \label{fig:hammer}
\end{figure*}

\subsection{Stock Prediction}
We show results on a stock opening price task in \cref{fig:stocks}.

\begin{figure}[htb]
    \centering
    \includegraphics[width=\linewidth]{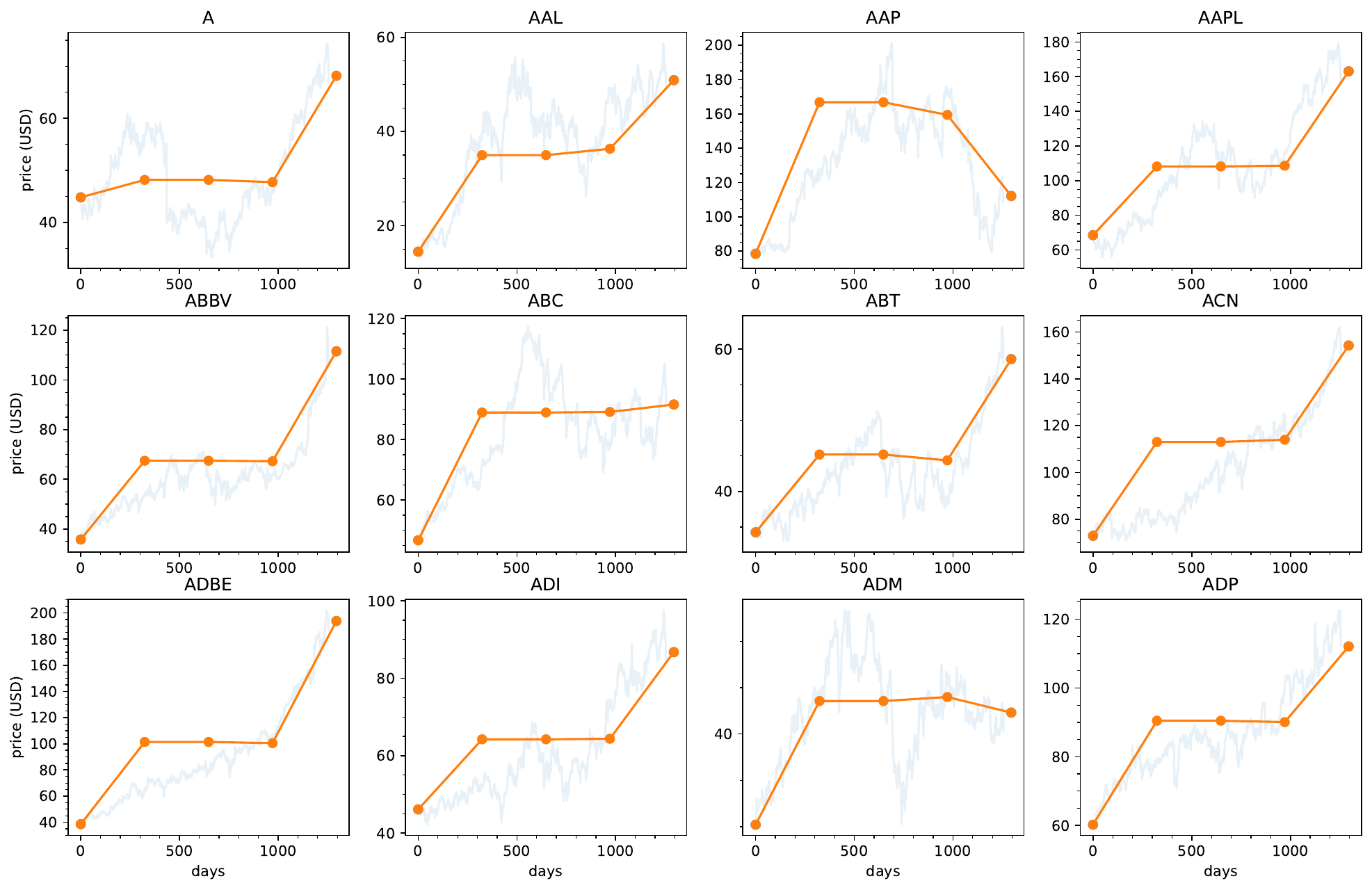}
    \caption{\textbf{Stock Prediction.} We apply temporal contrastive learning to time series data of the stock market.
        Data are the opening prices for the 500 stocks in the S\&P 500, over a four year window.
        We remove 30 stocks that are missing data.
        For evaluation, we choose a 100 day window from a validation set, and use \cref{thm:planning} to perform ``inpainting,'' predicting the intermediate stock prices \emph{jointly} for all stocks ({\color{orange}orange}), given the first and last stock price.
        The true stock prices are shown in {\color{blue}blue}.
        While we do not claim that this is a state-of-the-art model for stock prediction, this experiment demonstrates another potential application of our theoretical results.}
    \label{fig:stocks}
\end{figure}

\end{document}